\documentclass{article} % For LaTeX2e
\usepackage{conference,times}

\usepackage{amsmath,amsfonts,bm}

\def\eqref#1{equation~\ref{#1}}
\def\1{\bm{1}}

\DeclareMathAlphabet{\mathsfit}{\encodingdefault}{\sfdefault}{m}{sl}
\SetMathAlphabet{\mathsfit}{bold}{\encodingdefault}{\sfdefault}{bx}{n}

\DeclareMathOperator*{\argmax}{arg\,max}

\usepackage{url}
\usepackage{booktabs}
\usepackage{multirow}
\usepackage{amsmath}
\usepackage{amssymb}
\usepackage{amsthm}
\usepackage{mathtools}
\usepackage{enumitem}
\usepackage{xcolor}
\usepackage{pifont}
\usepackage{tikz}
\usepackage{wrapfig}
\usepackage{graphicx}
\usetikzlibrary{calc}
\usetikzlibrary{decorations.pathreplacing}
\usetikzlibrary{arrows.meta}

\usepackage{hyperref}
\hypersetup{colorlinks=true,linkcolor=black,citecolor=blue,urlcolor=blue}
\theoremstyle{plain}
\newtheorem{theorem}{Theorem}
\newtheorem{lemma}[theorem]{Lemma}
\newtheorem{proposition}[theorem]{Proposition}
\newtheorem{corollary}[theorem]{Corollary}
\theoremstyle{definition}

\theoremstyle{remark}
\newtheorem{remark}[theorem]{Remark}

\newcommand{\Prob}{\mathbb{P}}
\newcommand{\indic}{\mathbf{1}}

\newcommand{\Yhead}{\widehat{Y}^{\mathrm{head}}}
\newcommand{\Yproto}{\widehat{Y}^{\mathrm{prot}}}
\newcommand{\Ydef}{\widehat{Y}^{\mathrm{def}}}

\newcommand{\ASR}{\mathrm{ASR}}

\newcommand{\Yrep}{\widehat{Y}^{\mathrm{rep}}}
\title{RePAIR: Inverting Backdoor Attacks in Semantic Segmentation}

\author{%
Arnold Brosch
\\
Heinrich Heine University Düsseldorf \\
\texttt{arnold.brosch@hhu.de} \\
\And
Kira Maag \\
Heinrich Heine University Düsseldorf\\
\texttt{kira.maag@hhu.de}\\
}

\iclrfinalcopy % Uncomment for camera-ready version, but NOT for submission.
\begin{document}

\maketitle

\begin{abstract} 
Despite significant advances in computer vision, semantic segmentation models remain vulnerable to backdoor attacks, posing a security risk. These attacks redirect pixels of a chosen victim class to an attacker-specified target class when a trigger is present, while preserving clean predictions.
Existing defenses are largely derived from image classification, while dedicated segmentation defenses remain scarce and typically require retraining or model modification. 
We introduce RePAIR, a post-hoc, label-free, attack-unaware defense based on the observation that backdoor-induced shifts can be stronger in the classifier head than in internal representations, leaving recoverable victim-class semantics. 
RePAIR infers the victim-target relation from an unlabeled, potentially contaminated image pool, constructs a safety-approved semantic reference, and localizes suspicious pixels by comparing head predictions with internal semantic evidence.
It then applies feature-space and output corrections toward the inferred victim semantics, while conservative localization and output constraints limit changes to benign predictions. 
We model the attack as adversarial probability transport and repair as its approximate reversal, deriving bounds and sufficient conditions for localization, residual attack success, clean-side changes, defense-induced target predictions, and attack-unaware pair identification. 
Experiments across segmentation architectures and attack settings show that RePAIR substantially reduces attack success while largely preserving clean performance.
\end{abstract}

\section{Introduction}
\label{sec:intro}

Semantic segmentation assigns a semantic class to every pixel proving perception in safety-critical systems such as automated driving \citep{cakir2022semanticsegmentationautonomousdriving} and medical imaging \citep{taghanaki2024deepsemanticsegmentationnatural}. 
Semantic segmentation networks are vulnerable to backdoor attacks that preserve normal performance on benign inputs while inducing targeted failures when a trigger is present \citep{li2021hidden}. 
For instance, a backdoor can cause a vehicle to disappear from the semantic map by relabeling its pixels as road, without affecting the prediction for the rest of the image.
Semantic segmentation backdoors have developed from fixed trigger patterns and predefined victim–target mappings \citep{li2021hidden} toward increasingly flexible attack settings. %\cite{zhang2026poisoningpixelsrevisitingbackdoor}.
Existing attacks allow the victim class to depend on trigger placement \citep{mao2023object}, exploit contextual relationships between semantic classes \citep{abbasi2025consegcontextualbackdoorattack}, or automatically optimize triggers and attack configurations \citep{lan2024influencer}. 
These attacks demonstrate that backdoors can induce targeted and spatially localized prediction changes under 
flexible conditions, making the development of robust defenses challenging.

While attacks on dense prediction tasks have received increasing attention, effective defenses remain comparatively underexplored. 
Detection-oriented methods can identify suspicious inputs or models but do not necessarily remove the underlying backdoor \citep{Liu_2023_CVPR}. 
Mitigation approaches based on retraining (e.g. fine-tuning and pruning) often require additional clean data and may alter the model's behavior \citep{liu2018fine}. 
For semantic segmentation, effective defenses should selectively correct attacked pixels while limiting changes to unrelated regions, motivating selective repair rather than global model modification.

\begin{figure}[t]
\centering
\includegraphics[width=0.245\linewidth]{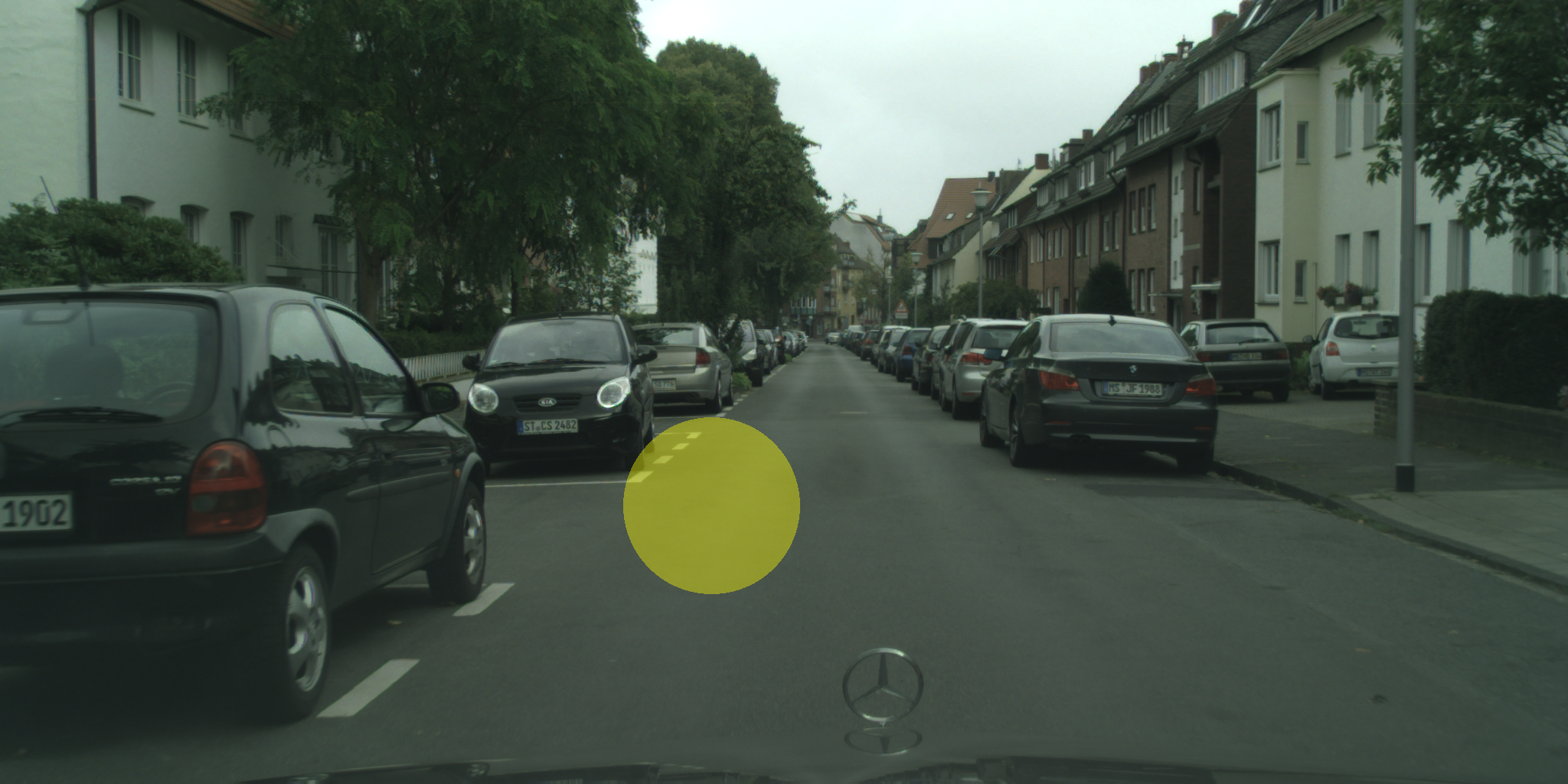}
\includegraphics[width=0.245\linewidth]{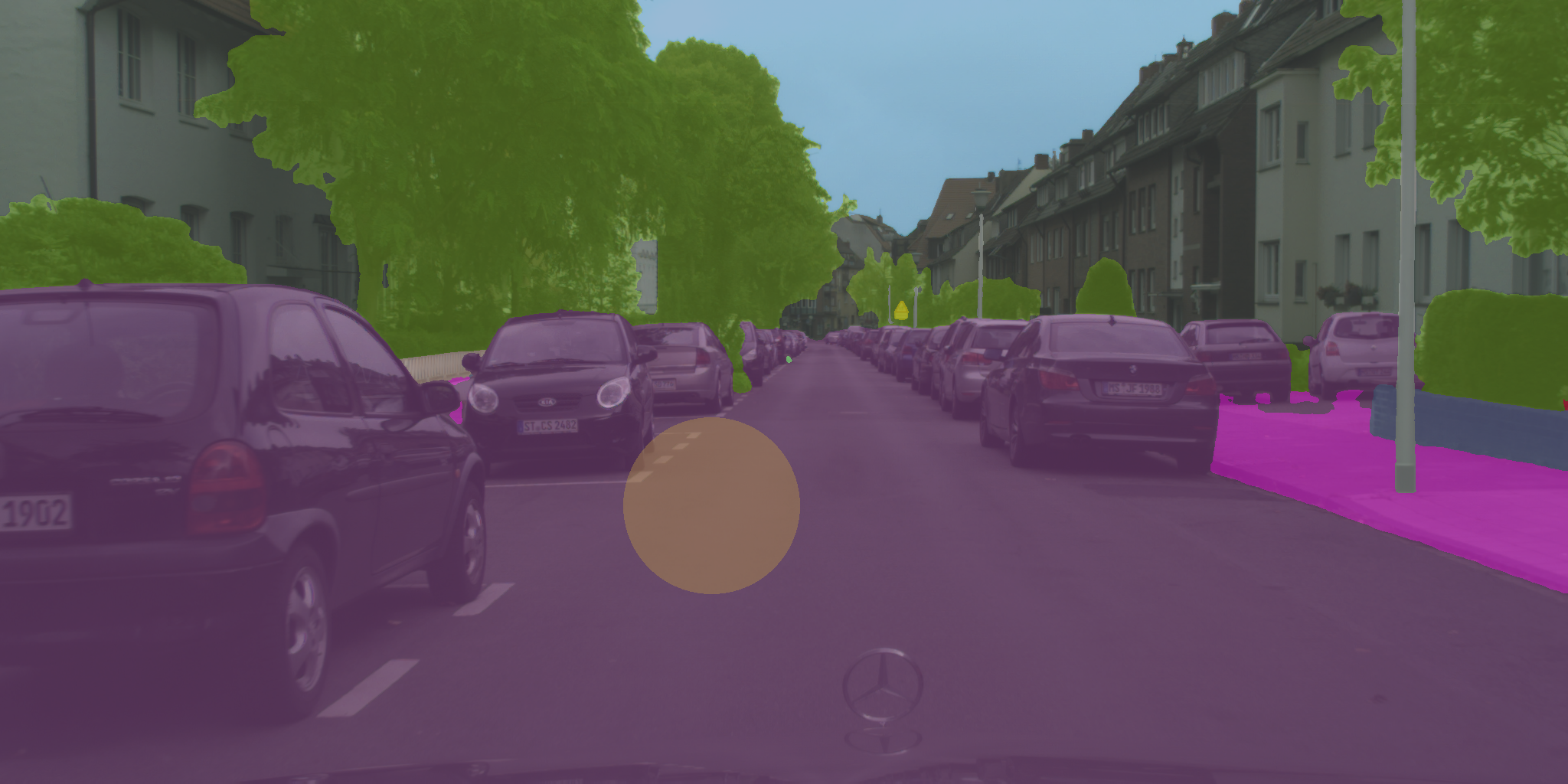}
\includegraphics[width=0.245\linewidth]{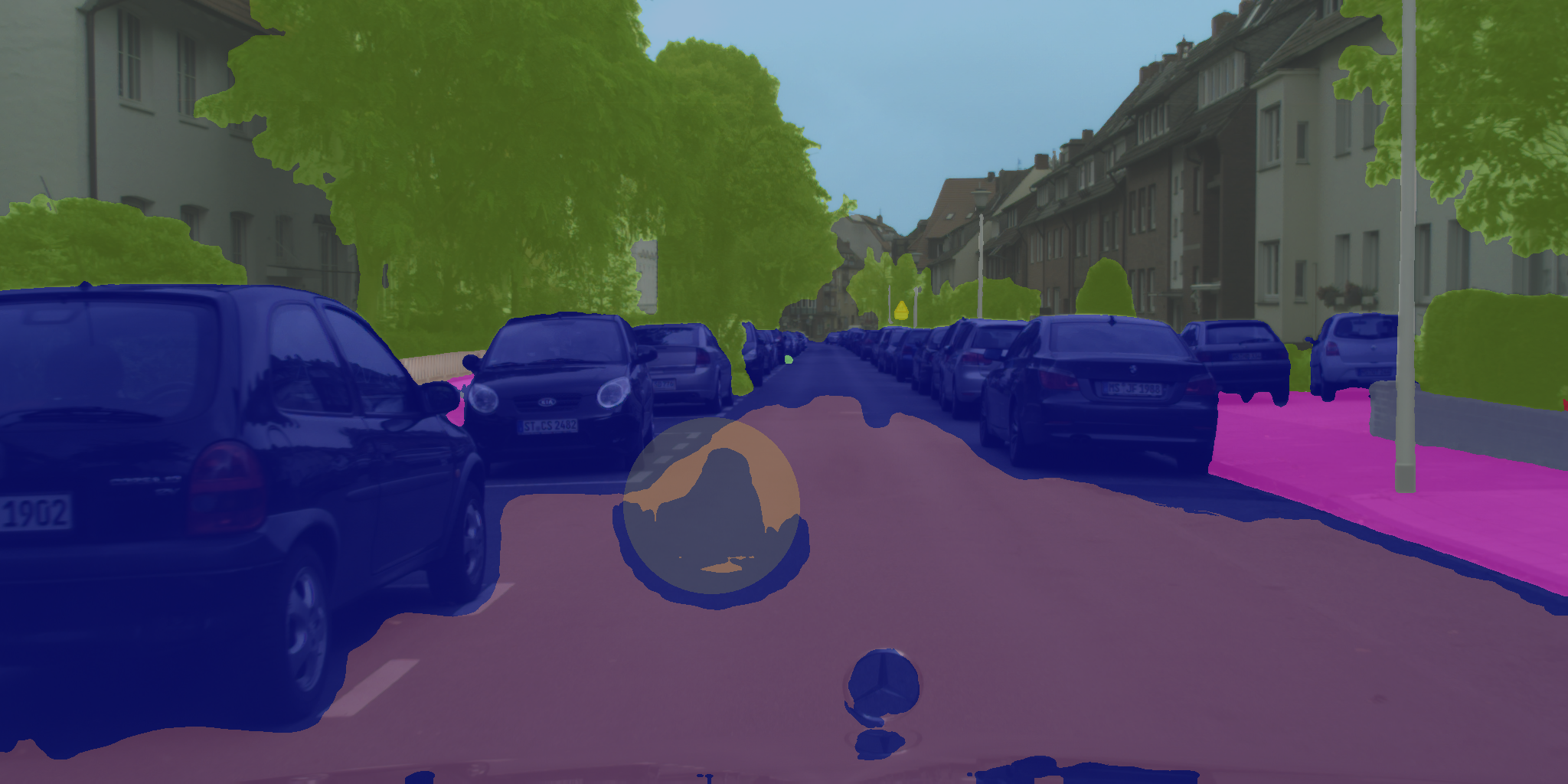}
\includegraphics[width=0.245\linewidth]{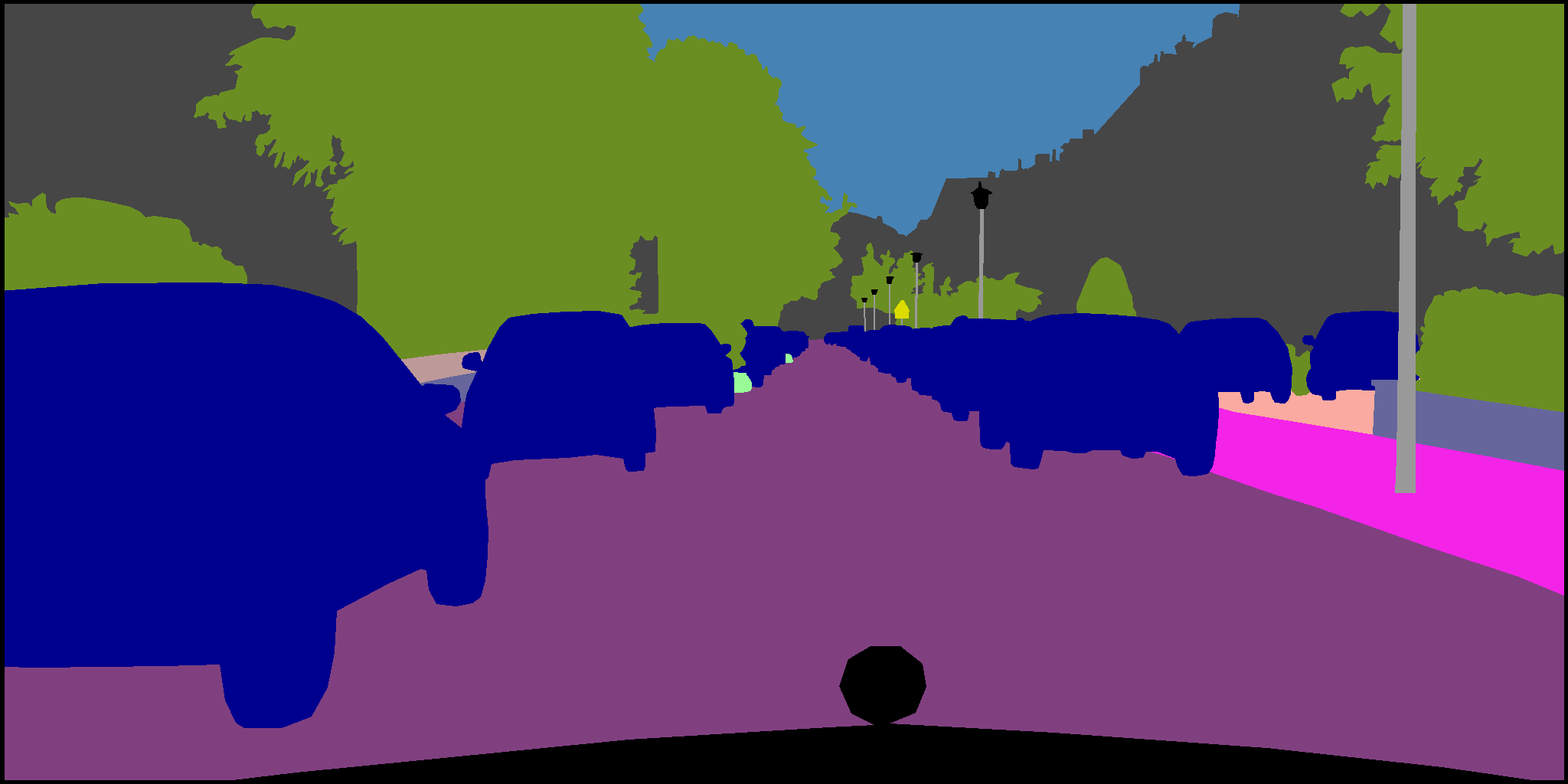}
\caption{From left to right: poisoned image, prediction under the backdoor attack, prediction after applying our defense and ground truth segmentation. }
\label{fig:example_intro}
\end{figure}

In this paper, we introduce a \textbf{Re}presentation-guided pixel-wise attack Re\textbf{pair} (RePAIR) against backdoor attacks on semantic segmentation models. 
We focus on a key empirical observation: the backdoor-induced shift can be substantially stronger in the final task decision than in an appropriately chosen internal semantic representation, even when the encoder itself is affected by the backdoor. 
Victim semantics can therefore remain partially recoverable even when
the head predicts the attacker-selected target. The representation that exposes this
state is architecture-dependent. 
E.g. for DeepLabV3+ \citep{Chen2018_deeplab} with a ResNet-50 backbone, the final backbone stage immediately before atrous spatial pyramid pooling provides a useful reference. 
For SegFormer \citep{xie2021segformersimpleefficientdesign} with a MiT-B5 backbone, no single block is sufficient. 
Victim evidence is distributed across several internal blocks and is reconstructed by a learned
multi-layer logistic read-out. 

This observation has a natural geometric interpretation.
For each pixel, the predicted class probabilities form a point in the probability simplex, where each coordinate represents the probability assigned to one semantic class. 
An idealized targeted backdoor shifts probability mass from a victim class to an attacker-selected target class, moving the prediction toward the target and away from the victim while remaining within the same simplex. 
Natural segmentation errors can also shift probability mass between classes (typically not concentrated in one direction), but a backdoor induces a systematic flow from one semantic class to another.

This directed view can be described by a local transport operator acting on the class-probability distribution. 
RePAIR approximates the corresponding reverse transport from surviving victim evidence and model-derived reference geometry, while treating feasible victim-target pairs as hypotheses and identifying the attacked relation without prior knowledge of either class.

A correct victim–target pair alone is not sufficient for a safe repair.  
The reference data used for repair must also be protected from poisoned samples, otherwise, the defense may learn the very distortion it is meant to remove.
We therefore first identify candidate clean images and then check whether they provide reliable evidence of the victim class before using them for repair. 
An image is rejected if its internal victim evidence is severely degraded, or if a moderate degradation is accompanied by an unusually strong shift from the victim class to the target class. 
Rejected images are not replaced merely to meet a desired reference-set size. 
If too little reliable victim evidence remains, the defense does not construct a repair state.   
Together, these components form our method for selective and reliable backdoor repair.
The source code of our method is publicly available at \url{https://github.com/A-Brosch/RePAIR}.  
Our contributions can be summarized as follows:
\begin{itemize}
\item We introduce a post-hoc repair method for backdoored semantic segmentation models that reconstructs victim-consistent semantics and selectively corrects attacked pixels, without retraining or globally modifying the deployed network.
\item We develop a theoretical framework that models targeted segmentation backdoors as directed victim-to-target transport in the class-probability simplex. This view links surviving internal evidence to localization, residual attack success, attack-unaware pair identification, and clean-side safety. 
\item Our method for generating a reliable repair reference is attack-unaware and operates on potentially compromised data. It identifies consistent victim-to-target evidence and screens candidate clean-like images with a one-way victim-safety firewall.
\item We develop architecture-specific semantic references for both convolutional- and transformer-based segmentation models, showing that the information required for repair may be concentrated in a single representation or distributed across multiple internal layers.
\end{itemize}

% ======================================================
\section{Related Work}
\label{sec:related}

\paragraph{Backdoors Attacks.}
The first work adapted the BadNets attack \citep{Gu2019BadNets} from image classification to semantic segmentation, using either a black stripe at the image border as a non-semantic trigger or a semantic class present in the image as a trigger \citep{li2021hidden}. Once activated, the attack relabeled all pixels of a victim class as a target class.
The Influencer Backdoor Attack (IBA, \citep{lan2024influencer}) places a trigger on non-victim pixels while relabeling the victim class as a predefined target. 
It further introduces Nearest Neighbor Injection (NNI) and Pixel Random Labeling (PRL), which respectively place the trigger near the victim region and add random label changes to non-victim pixels to strengthen the attack's reliance on global context. 
The Object-Free Backdoor Attack (OFBA, \citep{mao2023object}) places the trigger on the target object and varies victim classes and trigger positions during training, allowing the victim class to be selected by trigger placement at inference time while keeping the target class fixed. 
The Contextual Segmentation attack (ConSeg) uses a natural semantic trigger, such as a motorcycle, instead of an artificial pattern \citep{abbasi2025consegcontextualbackdoorattack}. It relabels the victim class as the target and adds a small number of co-occurring target-related classes within the victim region to strengthen the contextual association.
BADSEG \citep{zhang2026poisoningpixelsrevisitingbackdoor} broadens the attack space to four attack types: Object-to-Object (O2O), Object-to-Background (O2B), Background-to-Object (B2O), and Background-to-Background (B2B), along with instance-level and conditional variants. 
A surrogate model optimizes trigger parameters and selects victim-target pairs using feature-space class centroids.

\textbf{Defense Methods.} \hspace{0.5ex}
Work on defenses against backdoor attacks in semantic segmentation remains limited, and existing studies therefore mainly adapt defenses originally developed for image classification. 
Test-time detection methods such as STRIP \citep{gao2019strip} and TeCo \citep{Liu_2023_CVPR} detect suspicious inputs by measuring prediction entropy under image superposition or prediction consistency under image corruptions, respectively.  
Similarly, Zeng et al.~\citep{Zeng_2021_ICCV} detect anomalous trigger-related patterns in frequency domain \citep{ahmed1974discrete}. 
BEATRIX \citep{beatrix} detects anomalous internal representations using class-conditional higher-order Gram matrices \citep{zhang2026poisoningpixelsrevisitingbackdoor}. 
These methods aim to identify suspicious inputs but do not repair the affected predictions or remove the underlying backdoor. 
Pruning \citep{liu2018fine} removes neurons with low activation on benign inputs and fine-tunes the remaining network on clean data to eliminate dormant backdoor components and recover clean performance.
Fine-tuning \citep{Sha:He:Berrang:Humbert:Zhang:2022} instead directly retrains the potentially compromised model on clean data to weaken the learned association between trigger and target.
Both methods were also evaluated as defenses against segmentation backdoors, but both require access to clean data for mitigation \citep{lan2024influencer,zhang2026poisoningpixelsrevisitingbackdoor}.
Segment Cleanse \citep{mao2023object} adapts Neural Cleanse \citep{wang2019neural} to semantic segmentation by reconstructing object-level trigger candidates and
mitigating detected backdoors through retraining, likewise requiring model modification and clean data. 
Anti Backdoor Learning (ABL, \citep{Li2021AntiBackdoor}) exploits differences in training loss dynamics to identify suspicious samples and reduce their influence through gradient ascent. 
While these approaches identify or suppress backdoor behavior, they do not enable post-hoc, selective repair of affected pixels in a deployed model.

RePAIR differs from detection-based methods (e.g.\ STRIP, TeCo, frequency-domain defenses, and BEATRIX), which identify suspicious inputs, samples, or internal representations but do not repair the affected predictions. 
In contrast to model-modification approaches such as pruning, fine-tuning, and Segment Cleanse, our method does not retrain or otherwise alter the compromised model.  
It is also applied post-hoc, unlike ABL, which operates during training. 
Finally, RePAIR requires neither ground truth labels nor an external clean reference set, instead deriving the information needed for selective repair from the model and available data while leaving unaffected predictions unchanged.

\section{Theoretical Framework}
\label{sec:theory}
We view a targeted backdoor in semantic segmentation as a trigger-induced, spatially selective change in dense predictions. 
Semantic segmentation predicts a probability distribution over semantic classes for each pixel, allowing a backdoor to selectively alter the predictions of pixels belonging to a particular victim class. 
During training, the attacker introduces triggered images with modified pixel-wise labels, assigning victim-class pixels to an attacker-selected target class. 
When activated, the backdoor causes pixels belonging to a victim class to be relabeled as an attacker-selected target class, while leaving other pixels largely unaffected. 
We treat the images as samples of a stochastic process, we further assume the neural network to learn the induced distribution during training.
In order to change the argmax of the attacked pixels' prediction the trigger has to lead to redistribution of probabilisitc mass from one atom to another (assumed to be mostly from victim to target class). 
The possibility of recognizing and ``inverting'' this transport yields two effects, the identification of victim and target class and nullyfication of the triggers effect, thus rendering it use- and harmless.

We model a targeted segmentation backdoor as a spatially selective victim-to-target distortion, corresponding to a directed redistribution of class probability from a victim class $v$ to a target class $t\neq v$. 
A successful backdoor may drive the final segmentation head toward the target class while an architecture-specific internal reference still retains evidence of the original victim semantics. 
This disagreement between the final prediction and the internal semantic evidence provides the signal used for localization and repair. 
RePAIR exploits this mismatch without assuming encoder invariance, semantic ground truth, trigger presence, or prior knowledge of the attacked pair.

\begin{figure*}
    \centering
    \resizebox{0.7\textwidth}{!}{% Required:
% \usepackage{amsmath,amssymb}
% \usepackage{tikz}
% \usetikzlibrary{arrows.meta}

\begin{tikzpicture}[
    every node/.style={font=\small},
    axis/.style={->, thick},
    stemvictim/.style={very thick, blue!75!black},
    stemtarget/.style={very thick, red!75!black},
    atomvictim/.style={
        circle,
        fill=blue!75!black,
        inner sep=1.7pt
    },
    atomtarget/.style={
        circle,
        fill=red!75!black,
        inner sep=1.7pt
    },
    supportnode/.style={
        circle,
        draw=black!60,
        fill=white,
        minimum size=20pt,
        inner sep=0pt
    },
    retained/.style={
        ->,
        gray!60,
        >=Latex,
        line width=0.9pt
    },
    attackflow/.style={
        ->,
        teal!70!black,
        >=Latex,
        line width=2.3pt
    },
    repairflow/.style={
        ->,
        orange!85!black,
        >=Latex,
        line width=2.3pt
    }
]

% ============================================================
% PANEL (a)
% ============================================================

\node[font=\small\bfseries]
    at (8.2,5.00)
    {(a) Backdoor-induced redistribution};

\node
    at (8.2,4.52)
    {$
      q=p^\star+\xi d_{vt},
      \qquad
      d_{vt}=e_t-e_v
     $};

% ------------------------------------------------------------
% (a1) Clean output
% ------------------------------------------------------------

\begin{scope}[shift={(0,0)}]

    \node[
        blue!75!black,
        font=\small\bfseries
    ] at (2.4,3.65)
        {Clean output $p^\star$};

    \draw[axis]
        (0,0) -- (4.9,0)
        node[right] {class};

    \draw[axis]
        (0,0) -- (0,2.75)
        node[above] {probability};

    \foreach \x/\h/\lab/\mass in {
        0.5/0.50/{c_1}/{0.10},
        1.4/1.75/{c_v}/{0.35},
        2.3/0.75/{c_3}/{0.15},
        3.2/1.25/{c_t}/{0.25},
        4.1/0.75/{c_5}/{0.15}}
    {
        \draw[stemvictim]
            (\x,0) -- (\x,\h);

        \node[atomvictim]
            at (\x,\h) {};

        \node[
            above=3pt,
            blue!75!black
        ] at (\x,\h)
            {$\mass$};

        \node[
            below=3pt
        ] at (\x,0)
            {$\lab$};
    }

    \node[left=3pt] at (0,0)   {$0$};
    \node[left=3pt] at (0,1.0) {$0.2$};
    \node[left=3pt] at (0,2.0) {$0.4$};

    \node[
        blue!75!black
    ] at (2.4,-0.72)
        {$
        \operatorname*{arg\,max}_{k}
        p^\star_k=c_v
        $};

\end{scope}

% ------------------------------------------------------------
% (a2) Transport coupling
% ------------------------------------------------------------

\begin{scope}

    \node[
        font=\small\bfseries
    ] at (8.2,3.65)
        {transport coupling $\gamma_\omega$};

    \node[
        blue!75!black
    ] at (6.65,3.15)
        {clean};

    \node[
        red!75!black
    ] at (9.75,3.15)
        {triggered};

    \foreach \i/\lab/\y in {
        1/{c_1}/2.60,
        2/{c_v}/2.05,
        3/{c_3}/1.50,
        4/{c_t}/0.95,
        5/{c_5}/0.40}
    {
        \node[supportnode]
            (aL\i)
            at (6.65,\y)
            {$\lab$};

        \node[supportnode]
            (aR\i)
            at (9.75,\y)
            {$\lab$};
    }

    \foreach \i in {1,...,5}{
        \draw[retained]
            (aL\i) -- (aR\i);
    }

    \draw[attackflow]
        (aL2)
        to[bend right=18]
        node[
            pos=0.48,
            below=4pt,
            fill=white
        ]
        {$\xi=0.17$}
        (aR4);

    % Compact legend
    \draw[retained]
        (6.55,-0.38) -- (7.20,-0.38);

    \node[
        anchor=west
    ] at (7.30,-0.38)
        {retained mass};

    \draw[attackflow]
        (6.55,-0.82) -- (7.20,-0.82);

    \node[
        anchor=west
    ] at (7.30,-0.82)
        {$v\!\rightarrow\!t$ transfer};

\end{scope}

% ------------------------------------------------------------
% (a3) Triggered output
% ------------------------------------------------------------

\begin{scope}[shift={(11.5,0)}]

    \node[
        red!75!black,
        font=\small\bfseries
    ] at (2.4,3.65)
        {Triggered output $q$};

    \draw[axis]
        (0,0) -- (4.9,0)
        node[right] {class};

    \draw[axis]
        (0,0) -- (0,2.75)
        node[above] {probability};

    \foreach \x/\h/\lab/\mass in {
        0.5/0.50/{c_1}/{0.10},
        1.4/0.90/{c_v}/{0.18},
        2.3/0.75/{c_3}/{0.15},
        3.2/2.10/{c_t}/{0.42},
        4.1/0.75/{c_5}/{0.15}}
    {
        \draw[stemtarget]
            (\x,0) -- (\x,\h);

        \node[atomtarget]
            at (\x,\h) {};

        \node[
            above=3pt,
            red!75!black
        ] at (\x,\h)
            {$\mass$};

        \node[
            below=3pt
        ] at (\x,0)
            {$\lab$};
    }

    \node[left=3pt] at (0,0)   {$0$};
    \node[left=3pt] at (0,1.0) {$0.2$};
    \node[left=3pt] at (0,2.0) {$0.4$};

    \node[
        red!75!black
    ] at (2.4,-0.72)
        {$
        \operatorname*{arg\,max}_{k}
        q_k=c_t
        $};

\end{scope}

% ============================================================
% SEPARATOR
% ============================================================

\draw[gray!50]
    (-0.2,-1.55) -- (16.6,-1.55);

% ============================================================
% PANEL (b)
% ============================================================

\node[
    font=\small\bfseries
] at (8.2,-2.15)
    {(b) Conceptual correction direction};

\node
    at (8.2,-2.65)
    {$
      q\rightsquigarrow\widetilde p,
      \qquad
      -d_{vt}=e_v-e_t
     $};

% ------------------------------------------------------------
% (b1) Triggered output
% ------------------------------------------------------------

\begin{scope}[shift={(0,-7.05)}]

    \node[
        red!75!black,
        font=\small\bfseries
    ] at (2.4,3.65)
        {Triggered output $q$};

    \draw[axis]
        (0,0) -- (4.9,0)
        node[right] {class};

    \draw[axis]
        (0,0) -- (0,2.75)
        node[above] {probability};

    \foreach \x/\h/\lab/\mass in {
        0.5/0.50/{c_1}/{0.10},
        1.4/0.90/{c_v}/{0.18},
        2.3/0.75/{c_3}/{0.15},
        3.2/2.10/{c_t}/{0.42},
        4.1/0.75/{c_5}/{0.15}}
    {
        \draw[stemtarget]
            (\x,0) -- (\x,\h);

        \node[atomtarget]
            at (\x,\h) {};

        \node[
            above=3pt,
            red!75!black
        ] at (\x,\h)
            {$\mass$};

        \node[
            below=3pt
        ] at (\x,0)
            {$\lab$};
    }

    \node[left=3pt] at (0,0)   {$0$};
    \node[left=3pt] at (0,1.0) {$0.2$};
    \node[left=3pt] at (0,2.0) {$0.4$};

    \node[
        red!75!black
    ] at (2.4,-0.72)
        {$
        \operatorname*{arg\,max}_{k}
        q_k=c_t
        $};

\end{scope}

% ------------------------------------------------------------
% (b2) Conceptual reverse coupling
% ------------------------------------------------------------

\begin{scope}[shift={(0,-7.05)}]

    \node[
        font=\small\bfseries
    ] at (8.2,3.65)
        {conceptual reverse coupling $\gamma_\omega^\top$};

    \node[
        red!75!black
    ] at (6.65,3.15)
        {triggered};

    \node[
        blue!75!black
    ] at (9.75,3.15)
        {repaired};

    \foreach \i/\lab/\y in {
        1/{c_1}/2.60,
        2/{c_v}/2.05,
        3/{c_3}/1.50,
        4/{c_t}/0.95,
        5/{c_5}/0.40}
    {
        \node[supportnode]
            (bL\i)
            at (6.65,\y)
            {$\lab$};

        \node[supportnode]
            (bR\i)
            at (9.75,\y)
            {$\lab$};
    }

    \foreach \i in {1,...,5}{
        \draw[retained]
            (bL\i) -- (bR\i);
    }

    \draw[repairflow]
        (bL4)
        to[bend left=18]
        node[
            pos=0.48,
            above=4pt,
            fill=white
        ]
        {$\xi=0.17$}
        (bR2);

    \draw[retained]
        (6.55,-0.38) -- (7.20,-0.38);

    \node[
        anchor=west
    ] at (7.30,-0.38)
        {retained mass};

    \draw[repairflow]
        (6.55,-0.82) -- (7.20,-0.82);

    \node[
        anchor=west
    ] at (7.30,-0.82)
        {$t\!\rightarrow\!v$ correction};

\end{scope}

% ------------------------------------------------------------
% (b3) Idealized repaired output
% ------------------------------------------------------------

\begin{scope}[shift={(11.5,-7.05)}]

    \node[
        blue!75!black,
        font=\small\bfseries
    ] at (2.4,3.65)
        {Idealized repaired output $\widetilde p$};

    \draw[axis]
        (0,0) -- (4.9,0)
        node[right] {class};

    \draw[axis]
        (0,0) -- (0,2.75)
        node[above] {probability};

    \foreach \x/\h/\lab/\mass in {
        0.5/0.50/{c_1}/{0.10},
        1.4/1.75/{c_v}/{0.35},
        2.3/0.75/{c_3}/{0.15},
        3.2/1.25/{c_t}/{0.25},
        4.1/0.75/{c_5}/{0.15}}
    {
        \draw[stemvictim]
            (\x,0) -- (\x,\h);

        \node[atomvictim]
            at (\x,\h) {};

        \node[
            above=3pt,
            blue!75!black
        ] at (\x,\h)
            {$\mass$};

        \node[
            below=3pt
        ] at (\x,0)
            {$\lab$};
    }

    \node[left=3pt] at (0,0)   {$0$};
    \node[left=3pt] at (0,1.0) {$0.2$};
    \node[left=3pt] at (0,2.0) {$0.4$};

    \node[
        blue!75!black
    ] at (2.4,-0.72)
        {$
        \operatorname*{arg\,max}_{k}
        \widetilde p_k=c_v
        $};

\end{scope}

\end{tikzpicture}}%
    \caption{Targeted backdoor as directed victim-to-target transport. The
    prediction may move toward the target while internal representations retain
    victim evidence, which RePAIR uses for localization and repair. The forward
    operator and its coupling describe the same directed mass transfer at the
    marginal and joint-measure levels, respectively.}
    \label{fig:backdoor_transport}
\end{figure*}

\textbf{Directed distortion.} \hspace{0.5ex}
Let $p^\star\in\Delta^{K-1}$ denote the clean reference probability vector over $K$ semantic classes and $q$ its triggered counterpart predicted by a semantic segmentation network.
The attack is modeled locally as a transfer of probability mass $\xi\geq0$ from the victim class $v$ toward the target class $t$ 
\begin{equation}
    q=T_{\xi}(p^\star)
    =p^\star+\xi(e_t-e_v) \enspace,
    \qquad
    0\leq\xi\leq p_v^\star \enspace,
    \label{eq:theory_transport_operator}
\end{equation}  
where $e_c$ is the unit vector associated with class $c$. 
The geometric interpretation shown is in Fig.~\ref{fig:backdoor_transport}. 
Thus the victim probability decreases
by $\xi$, the target probability increases by $\xi$, and the victim-over-target margin $\psi_{vt}(p)=p_v-p_t$ satisfies
\begin{equation}
    \psi_{vt}(q)=\psi_{vt}(p^\star)-2\xi \enspace.
    \label{eq:theory_margin_shift}
\end{equation}
Assuming $p_v^\star>p_t^\star$, pairwise preference changes once
$\xi>(p_v^\star-p_t^\star)/2$. A complete class change additionally requires
the target coordinate to exceed all remaining classes.
This is an analytical description of the dominant directed failure, not a claim that the poisoned network is globally one-dimensional or that RePAIR observes the unknown clean $p^\star$ or the transported mass $\xi$.

\textbf{Inverse operator and transport coupling.} \hspace{0.5ex}
The local model in Equation~\eqref{eq:theory_transport_operator} makes the notion of \emph{inverting} precise. 
If $\xi$ were known and $q$ lay in the
feasible image of $T_\xi$, the inverse marginal map would be
\begin{equation}
    T_\xi^{-1}(q)
    =q+\xi(e_v-e_t)\enspace.
    \label{eq:theory_inverse_operator}
\end{equation}
The same forward displacement can also be represented by a one-edge coupling $\gamma_\xi$. 
With $(\gamma_\xi)_{vv}=p_v^\star-\xi$, $(\gamma_\xi)_{vt}=\xi$, and $(\gamma_\xi)_{kk}=p_k^\star$ for $k\neq v$, its marginals satisfy
\begin{equation}
    \gamma_\xi\mathbf 1=p^\star \enspace,
    \qquad
    \gamma_\xi^\top\mathbf 1=q\enspace.
    \label{eq:theory_coupling_marginals}
\end{equation}
These are related but distinct descriptions of reversal.
The inverse operator $T_\xi^{-1}$ maps the displaced marginal $q$ back toward $p^\star$, whereas the transposed coupling $\gamma_\xi^\top$ exchanges the forward coupling's
marginals and therefore represents the corresponding reverse mass flow as a joint measure. 
The transposed coupling is not itself the inverse operator, and
no optimal-transport claim is implied without specifying a transport cost.
RePAIR observes neither $\xi$ nor the hidden attack operator or coupling. 
Its repair instead approximates the reverse victim-directed transport from the discovered pair, surviving internal victim evidence, and safety-approved reference geometry. 
This is the sense in which RePAIR \emph{inverts} the
backdoor effect.

\textbf{Surviving internal evidence and localization.} \hspace{0.5ex}
Let $A=\{\Yhead=t\}$ denote successful baseline attack on a triggered pixel of the victim class, where $\Yhead$ is the prediction of the undefended segmentation head.  
Let $\rho_v$ indicate whether the internal representation still retains evidence for the victim class, and we define $\eta_{\mathrm{ref}}$ as the fraction of successful baseline attacks for which this direct victim evidence is absent.  
Thus, a small $\eta_{\mathrm{ref}}$ means that victim evidence survives in most successful attacks, although this does not guarantee correct recovery.
When the released pair equals the true attacked pair, every successful attack
with surviving victim evidence is selected by the direct semantic arm.
Therefore, 
\begin{equation}
    \eta_{\mathrm{loc}}\leq\eta_{\mathrm{ref}}\enspace.
    \label{eq:theory_reference_localization}
\end{equation}
Additional residual branches can only reduce the miss rate further.

\textbf{Residual attack success.} \hspace{0.5ex}
Among triggered victim pixels, let $\varepsilon_{\mathrm{rep}}$ be the fraction of localized baseline attacks that still predict $t$ after repair, and let $\delta_{\mathrm{new}}$ denote defended target predictions outside the original
baseline attack event. 
Then the defended attack success rate is bounded by
\begin{equation}
    \ASR_{\mathrm{def}}
    \leq
    \ASR_{\mathrm{base}}
    \bigl[
        \eta_{\mathrm{loc}}
        +(1-\eta_{\mathrm{loc}})\varepsilon_{\mathrm{rep}}
    \bigr]
    +\delta_{\mathrm{new}}\enspace.
    \label{eq:theory_residual_asr}
\end{equation}
The bound separates three failure modes: missed attacked pixels, unsuccessful repair on localized pixels, and new target predictions introduced elsewhere.
When the released pair equals the true attacked pair, combining
$\eta_{\mathrm{loc}}\leq\eta_{\mathrm{ref}}$ with the residual-ASR bound gives
\begin{equation}
    \ASR_{\mathrm{def}}
    \leq
    \ASR_{\mathrm{base}}
    \bigl[
        \eta_{\mathrm{ref}}
        +(1-\eta_{\mathrm{ref}})\varepsilon_{\mathrm{rep}}
    \bigr]
    +\delta_{\mathrm{new}}\enspace.
    \label{eq:theory_reference_asr}
\end{equation}
Thus, surviving internal victim evidence limits the fraction of successful attacks that can be missed by the localizer. 
Whether these localized attacks are actually recovered still depends on the subsequent repair step. 
The No New Target Veto prevents the defense from introducing unsupported target predictions outside the authorized repair support.

\textbf{Attack-unaware pair identification.} \hspace{0.5ex}
The victim and target classes are not assumed to be known. 
Instead, RePAIR evaluates all ordered class pairs using an attack-unaware image-level evidence score.
For a pair $(a,b)$, let $G_i(a,b)$ denote the score on image-level, and let $g_{ab}=\mathbb E[G_i(a,b)]$ be its expected value.
For analysis, we decompose this score into natural evidence and an attack-induced contribution $g_{ab}=\nu_{ab}+\beta_{ab}$. 
Advantage over every competitor exceeds the strongest competing
natural-evidence advantage plus $\delta_{\mathrm{est}}$. 
The attacked pair can be uniquely identified if its attack-induced advantage over every competing pair is larger than both the strongest natural-evidence advantage of a competitor and the estimation error $\delta_{\mathrm{est}}$. 
This provides a sufficient separation condition for pair identification, but does not constitute a distribution-free guarantee for the implemented multi-route selector.

\textbf{Clean-side safety.} \hspace{0.5ex}
Let $\kappa_{\mathrm{cl}}$ denote the probability that a benign pixel is selected for repair, and let $\delta_{\mathrm{spill}}^{\mathrm{cl}}$ denote the probability of a defended prediction change outside the selected mask. 
Then
\begin{equation}
    \Prob_{\mathrm{cl}}(\Ydef\neq\Yhead)
    \leq
    \kappa_{\mathrm{cl}}+\delta_{\mathrm{spill}}^{\mathrm{cl}} \enspace,
    \label{eq:theory_clean_change}
\end{equation}
where $\Prob_{\mathrm{cl}}$ is the distribution over clean inputs and $\Ydef$ the prediction after repair.  
Thus, clean prediction changes are bounded by the probability of selecting benign pixels and the probability of changes outside the repair mask. 
Exact preservation is the special case where both terms are zero. 
The reference-calibration budgets are empirical constraints on the safety-approved population and do not provide population-level guarantees for unseen benign inputs.
The framework does not require reconstruction of an unobserved clean class distribution. 
Instead, it uses surviving victim evidence, attack-unaware pair identification, conservative localization, and a safety-approved reference population to selectively repair target predictions. 
The forward transport and its reversal provide the conceptual model, while the implemented defense approximates this reversal using model-derived evidence. Complete definitions and proofs are provided in Appendix~\ref{app:theory}.

%===================================================

\section{Method}
\label{sec:method}

RePAIR is a post-hoc, label-free, attack-unaware defense for targeted semantic segmentation backdoors. 
The compromised model remains frozen. The defense uses
neither trigger reconstruction nor retraining, and uses no semantic ground truth, clean/poison provenance, attack mask, attack family, poisoning rate, %clean twin, 
or predefined victim-target pair. 
Its inputs are the model and an
unlabeled image pool that may contain triggered samples.
The method has two stages. 
First,  an offline builder identifies a directed victim-target relation from model-derived evidence and constructs a conservative, safety-approved reference state. 
Throughout this section, $(v,t)$ denotes the operational victim--target pair
released by the attack-unaware builder.
At inference, a pair-conditioned localizer identifies inconsistent pixels, feature transport moves them toward victim semantics, and output safety limits unintended changes. 
More details are provided in Appendix~\ref{app:method_details}.

This operational pipeline implements the reverse-transport interpretation of Section~\ref{sec:theory} without estimating the hidden attack operator $T_\xi$, its transported mass $\xi$, or the coupling $\gamma_\xi$. 
Pair discovery estimates the semantic direction, the safety-approved state supplies a victim-side reference geometry, and localization determines where that reverse direction may be applied. 
Repair magnitude is therefore pixel- and branch-dependent rather than an estimate of a single global $\xi$.

\textbf{Architecture-specific semantic reference.} \hspace{0.5ex}
We write the segmentation network as $f=h\circ g$, where $g$ is the feature extractor and $h$ the segmentation head. 
For each pixel $x$, the head produces class probabilities $q_k(x)$ and prediction $\Yhead(x)=\argmax_k q_k(x)$. RePAIR compares this prediction with an architecture-specific internal semantic reference, which differs architectures. 

For convolutional networks (CNNs), we use the normalized final backbone feature $z(x)$. 
For each class $k$, its reference centroid is computed as $\mu_k=|\mathcal P_k|^{-1}\sum_{x\in\mathcal P_k}z(x)$, where $\mathcal P_k$ contains pixels assigned to $k$ by the frozen segmentation head. 
The internal prediction is then the class with the most similar centroid under cosine similarity $\Yproto(x)=\arg\max_k \operatorname{sim}(z(x),\mu_k)$. 
Reference estimation is image-disjoint, so an image is never evaluated using a centroid estimated from that same image.
 
For transformers, semantic evidence is distributed across several transformer blocks rather than concentrated in a single feature space. 
For each feasible victim-target pair $(v,t)$, we therefore use $L$ selected blocks and compute the cosine similarity of each internal feature $z_\ell(x)$ to the corresponding victim and target reference centroids $\mu_{\ell,v}$ and $\mu_{\ell,t}$, obtaining $c_{\ell,v}(x)$ and $c_{\ell,t}(x)$. 
The $2L$ similarities are concatenated into $\phi_{v,t}(x)$ and standardized before a regularized joint logistic readout,
$r_{v,t}(x)=\sigma(w^\top\widetilde\phi_{v,t}(x)+b)$. 
Here, $r_{v,t}(x)$ measures target compatibility, while $p_v^{\mathrm{int}}(x)=1-r_{v,t}(x)$
represents internal victim support. 
The readout is trained only from victim/target pseudo-labels produced by the frozen segmentation head on the reference population, i.e., no semantic or attack labels are used. The joint readout provides the direct victim-evidence signal, while the layer-wise similarities are retained as semantic coordinates for the residual geometry.

\textbf{Attack-unaware pair discovery.} \hspace{0.5ex}
Every feasible ordered pair $(v,t)$ is initially treated as a candidate victim-target relation.
For CNNs, the primary signal is internal-to-head contradiction: evidence for $(v,t)$ increases when pixels retain victim-class evidence in the internal representation while the segmentation head predicts $t$. 
This evidence is aggregated over frozen, image-disjoint reference views and retained only when the support is sufficiently stable and separated from competing pairs.  
The resulting pair defines the victim-to-target direction used by the subsequent localization and repair stages. 
Thus, RePAIR can identify the attack direction directly from model-derived evidence, without requiring prior knowledge of the victim or target class. 

For transformers, candidate structure is first identified using lightweight head, transfer, centroid, and structural cues. Supported pairs are then evaluated using image-disjoint joint-LogReg out-of-fold (OOF) evidence, with a separate OOF state for each pair.
Stable family proposals are formed from structural, mechanism, and family-verification evidence. The frozen rank prior is retained whenever it passes the candidate viability test. 
If the prior family is not viable, another family may be selected only through the conservative rescue path, which requires bilateral leave-one-out support and bilateral structural consistency. 

After the family has been frozen, its two possible victim-target directions are evaluated using FullEdge, FamilyVerify, and eligible Mechanism evidence.
A direction is accepted only when the required direction agreement is unique.
Structural evidence is not a regular direction vote and is used only to corroborate the rescue rule when the Mechanism direction channel is ineligible. 
If family viability or direction evidence is insufficient, the mapping remains inactive. The exact release contract is given in Appendix~\ref{app:sf_mapping_contract}.

\textbf{Safety-approved reference state.} \hspace{0.5ex}
After a victim-target pair has been selected, a correct mapping is not yet sufficient because the unlabeled image pool may itself contain poisoned samples. 
The builder therefore screens candidate reference images using image-disjoint cross-fitting and a victim-safety firewall. 
Images with unstable internal predictions are first rejected, followed by a safety filter that removes images showing severe victim-class collapse or moderate victim degradation accompanied by an unusually strong victim-to-target contradiction. 
These filters are applied sequentially and monotonically: rejected images are not reintroduced to satisfy a desired reference-set size, and thresholds are not relaxed to rescue an insufficient population.

The builder then checks whether the remaining images provide sufficient reliable support and semantic coverage for the selected victim-target pair. 
If not, the repair state remains inactive and no repair is applied at inference time. 
Otherwise, all runtime components, including semantic references, probability statistics, feature-space reference banks, and repair calibrations, are constructed exclusively from the final safety-approved population. 
We use this term rather than clean reference set because the defense has no access to the true provenance of the images and therefore cannot guarantee that they are trigger-free.

The resulting state provides the empirical victim-side reference used by the runtime defense. 
It is not treated as the unknown clean distribution $p^\star$ and is not assumed to be uncontaminated, i.e., it is instead a model-derived reference population that has passed the defense's one-way safety checks.

\textbf{Pair-consistent localization.} \hspace{0.5ex} 
Localization combines direct semantic contradiction with residual tests for pixels whose internal victim evidence is weak or distributed across multiple features.
A shared probability branch compares the observed target-over-victim probability excess $d_p(x)=q_t(x)-q_v(x)$
with a semantically matched reference excess.
Victim and target reference neighborhoods provide matched reference values and a semantic interpolation coordinate, yielding the expected excess $\widehat d_{\mathrm{ref}}(x)$.
RePAIR measures the one-sided residual $D_{\mathrm{prob}}(x) = [d_p(x)-\widehat d_{\mathrm{ref}}(x)]_+$, which is calibrated image-disjointly on the frozen safety-approved population using an empirical reference-pixel budget. 
This budget controls the probability-residual calibration and is distinct from the repair margin. 
Matching is performed after removing the victim-target semantic axis, so the expected margin is conditioned on the remaining semantic geometry rather than on the displacement being tested.

CNNs combine the preserved semantic contradiction with target-manifold, self-counterfactual, and probability-residual cues. 
Transformer use five sequential, mutually exclusive stages restricted to baseline head-target pixels: semantic contradiction, residual victim-manifold rupture, residual
target-manifold rupture, residual counterfactual manifold trajectory, and residual semantic probability. 
Their union defines $M_{\mathrm{TF}}$ and satisfies $M_{\mathrm{TF}}\subseteq\{\Yhead=t\}$. 
The staged construction records why a pixel was selected, prevents later branches from relabeling an earlier decision,
and enables branch-specific repair. 
Exact masks and image-disjoint calibration rules are given in Appendix~\ref{app:localization_repair}.

\textbf{Feature-first repair.} \hspace{0.5ex}
RePAIR first repairs the internal representation of localized pixels before applying any residual output correction. 
For each localized feature, the discovered victim-target direction is removed from the retrieval geometry, and safety-approved victim references are searched in the orthogonal complement. 
The retrieved neighborhood is aggregated and the feature is transported toward the victim reference with branch-dependent strength, while preserving its original norm. 
The modified representation is then passed through the unchanged remainder of the network. 
By matching in the orthogonal complement, the reference search is not dominated by the same victim-target displacement that the repair is intended to reverse.

If the feature repair does not fully restore the victim prediction, RePAIR applies a conservative output correction restricted to the discovered victim-target pair. 
The correction is calibrated from safety-approved victim references and can be partial depending on the repair branch. 
Thus, feature transport and output correction address the same victim-target discrepancy at different stages: feature transport moves the internal representation toward victim semantics, while the output step corrects residual target predictions by directly modifying only the victim and target
logits.

The resulting feature transport approximates the reverse-transport direction without requiring the unknown clean distribution or transport magnitude. 
The discovered pair provides the direction, while the safety-approved victim references provide the local repair target.

\textbf{Output safety and decision isolation.} \hspace{0.5ex}
Let $\Yrep$ denote the prediction after repair but before output safety. 
The No New Target Veto restores $\Yhead$ whenever the repair changes a pixel outside the authorized mask to the discovered target class $t$, provided that the original head prediction was not $t$. 
Thus, the veto prevents unsupported new target predictions outside the repair support while leaving existing target predictions unaffected. 
The veto is intentionally asymmetric and only constrains the creation of the discovered target.

%=====================================================================
\section{Experiments}

\subsection{Experimental Setting}
\label{sec:setup_exp}

We evaluate RePAIR on Cityscapes \citep{cordts2016Cityscapes}, PASCAL VOC 2012 \citep{everingham2015pascal} and ADE20K \citep{zhou2017scene}. 
We consider two representative segmentation architectures, DeepLabV3+ with a
ResNet-50 backbone \citep{Chen2018_deeplab} and SegFormer with a MiT-B5
backbone \citep{xie2021segformersimpleefficientdesign}.

We evaluate backdoor attacks at a $10\%$ poison rate, including IBA with NNI and PRL strategies, ConSeg, OFBA, and BadSeg with all four attack configurations.
For all object-to-background attacks (IBA NNI, IBA PRL, ConSeg, OFBA, and BadSeg-O2B), we use car$\rightarrow$road as the victim-target pair. 
For the remaining attack configurations, we use car$\rightarrow$person for object-to-object, sidewalk$\rightarrow$road for background-to-background, and road$\rightarrow$car for background-to-object attacks. 

We compare RePAIR against all defense methods, i.e., pruning, fine-tuning, Segment Cleanse, and ABL. 
Performance is evaluated using Attack Success Rate (ASR), benign mean intersection-over-union (mIOU-B), and attacked mIoU (mIOU-A). 
For BadSeg-B2O, ASR is restricted to ground-truth victim pixels inside the designated attack region. 
More details on metric definitions and experimental configurations are provided in Appendix~\ref{app:exp_setting}. 

%=====================================================================
\subsection{Numerical Results}
\label{sec:results}

\textbf{Attack success rate.} \hspace{0.5ex}
Table~\ref{tab:cityscapes} reports attack success rate before and after applying the evaluated defenses for the Cityscapes dataset. 
Lower values indicate stronger attack mitigation. ASR measures the fraction of victim-class pixels that are misclassified as the attacker-selected target class on triggered inputs. 

For DeepLabV3+, the undefended model achieves high ASR across all attacks, ranging from $60.83\%$ to $97.41\%$. 
Fine-tuning and pruning provide little mitigation and leave the ASR high for most attacks. 
In contrast, RePAIR consistently reduces the ASR to below $7.86\%$ across all evaluated attacks. 
This consistent performance across different attack types is not observed for the other defense methods. 
Segment Cleanse achieves effective mitigation for IBA and OFBA, but performs substantially worse on the BadSeg attacks compared to our defense. 
Overall, RePAIR achieves the best or second-best ASR for most attacks and maintains strong mitigation consistently across the full attack set. 

For SegFormer, the undefended model shows lower ASR for several attacks, with values ranging from $26.43\%$ to $95.46\%$. 
Fine-tuning and pruning again provide limited mitigation, while Segment Cleanse also fails to consistently reduce the ASR. 
ABL provides more consistent mitigation across the evaluated attacks. 
RePAIR achieves lower ASR than all other evaluated defenses on all backdoor attacks, reducing the ASR to below $0.54\%$ and frequently to zero. 
Thus, for these cases, our defense prevents the attack from successfully redirecting any victim-class pixels to the target class. 

\textbf{Segmentation performance.} \hspace{0.5ex} 
Table~\ref{tab:cityscapes} reports the mean mIoU-B and mIoU-A across all evaluated attacks in the two rightmost columns. 
The clean models, without a backdoor attack or defense, achieve an mIoU-B of $78.74\%$ for DeepLabV3+ and $80.46\%$ for SegFormer, representing the upper reference for benign segmentation performance. 
Under attack, the goal is to preserve this performance on benign classes while recovering the attacked classes. 

The undefended backdoored models retain an mIoU-B of $73.28\%$ for DeepLabV3+ and $75.80\%$ for SegFormer, showing that the attacks primarily affect the targeted classes. 
In contrast, ABL substantially degrades mIoU-B, indicating that its mitigation affects benign predictions beyond the attacked classes. 
RePAIR maintains mIoU-B at $74.50\%$ for DeepLabV3+ and $72.30\%$ for SegFormer, while achieving the highest mIoU-A among the evaluated defenses. 
Thus, RePAIR combines recovery of the attacked classes with preservation of the model's general segmentation performance.\\

\begin{table}[t]
\caption{
\textbf{Cityscapes results across backdoor attacks.} ASR is the primary metric, with lower values indicating better defense performance. mIoU-A and mIoU-B are averaged across the eight attacks and reported with their minimum and maximum deviations from the mean.}
\label{tab:cityscapes}
\centering
\footnotesize
\setlength{\tabcolsep}{4pt}
\renewcommand{\arraystretch}{1.4} % Standard ist 1.0
\resizebox{\textwidth}{!}{%
\begin{tabular}{llcccccccc|cc}
\toprule
Model & Method & IBA (NNI) & IBA (PRL) & ConSeg & OFBA & BadSeg-O2B & BadSeg-O2O & BadSeg-B2O & BadSeg-B2B & mIoU-A & mIoU-B \\
\midrule
& no defense & $88.56$ & $97.41$ & $86.23$ & $60.83$ & $95.49$ & $97.03$ & $96.87$ & $96.82$ & $71.14_{-7.28}^{+6.49}$ & $73.28_{-10.01}^{+6.00}$  \\
& fine-tuning & $56.36$ & $12.35$ & $84.54$ & $6.31$ & $13.42$ & $\underline{81.91}$ & $86.36$ & $97.69$ & $35.86_{-16.03}^{+14.73}$ & $37.65_{-21.43}^{+18.41}$ \\
Deep- & pruning & $89.60$ & $98.13$ & $90.07$ & $71.36$ & $96.63$ & $96.55$ & $97.58$ & $97.25$ & $70.13_{-6.58}^{+6.82}$ & $72.37_{-9.91}^{+6.00}$ \\
LabV3+ & Seg.-Cleanse & $\mathbf{0.36}$ & $\mathbf{0.89}$ & $86.23$ & $\mathbf{0.75}$ & $\underline{3.31}$ & $94.20$ & $96.85$ & $\underline{36.95}$ & $71.14_{-7.36}^{+6.49}$ & $74.14_{-7.31}^{+5.30}$ \\
& ABL & $83.94$ & $94.55$ & $\mathbf{1.12}$ & $\underline{0.78}$ & $17.29$ & $90.36$ & $\underline{86.19}$ & $92.17$ & $36.66_{-6.48}^{+4.39}$ & $40.97_{-4.08}^{+5.46}$ \\
& RePAIR (ours) & $\underline{1.18}$ & $\underline{1.01}$ & $\underline{1.51}$ & $1.10$ & $\mathbf{2.35}$ & $\mathbf{1.00}$ & $\mathbf{0.46}$ & $\mathbf{7.86}$ & $73.83_{-7.77}^{+3.97}$ & $74.50_{-5.69}^{+3.17}$ \\
\midrule
& no defense & $26.43$ & $28.71$ & $89.15$ & $45.32$ & $87.39$ & $86.24$ & $94.11$ & $95.46$ & $71.95_{-21.68}^{+7.83}$ & $75.80_{-27.52}^{+4.49}$ \\
& fine-tuning & $26.57$ & $35.44$ & $93.90$ & $46.92$ & $86.46$ & $85.12$ & $89.73$ & $95.10$ & $69.54_{-25.96}^{+8.77}$ & $73.05_{-33.14}^{+5.87}$ \\
Seg- & pruning & $26.80$ & $29.45$ & $90.06$ & $45.64$ & $87.53$ & $86.22$ & $93.76$ & $95.51$ & $71.74_{-22.43}^{+7.94}$ & $75.54_{-28.54}^{+4.64}$ \\
Former & Seg.-Cleanse & $21.07$ & $22.71$ & $75.46$ & $36.98$ & $80.67$ & $85.30$ & $94.11$ & $92.91$ & $71.91_{-20.72}^{+7.87}$ & $75.77_{-26.49}^{+4.52}$ \\
& ABL & $\underline{10.89}$ & $\underline{15.09}$ & $\underline{0.71}$ & $\underline{9.11}$ & $\underline{62.48}$ & $\underline{2.32}$ & $\underline{73.35}$ & $\underline{17.60}$ & $20.87_{-9.36}^{+9.14}$ & $22.50_{-9.46}^{+8.80}$ \\
& RePAIR (ours) & $\mathbf{0.00}$ & $\mathbf{0.00}$ & $\mathbf{0.54}$ & $\mathbf{0.00}$
 & $\mathbf{0.00}$ & $\mathbf{0.00}$ & $\mathbf{0.00}$ & $\mathbf{0.31}$
 & $74.52_{-21.34}^{+4.78}$
 & $72.30_{-19.16}^{+4.80}$ \\ 
\bottomrule
\end{tabular}
}
\end{table}

\textbf{Other datasets.} \hspace{0.5ex}  
We additionally evaluate RePAIR on PASCAL VOC 2012 and ADE20K, considering only Object-to-Background attack configurations for both datasets. 
On ADE20K, the evaluated attack configurations do not produce a sufficiently strong backdoor effect in the undefended models to support a meaningful mitigation study. This behavior is further analyzed theoretically in Appendix~\ref{app:theory}, and the corresponding results are reported in Table~\ref{tab:ade20k}. 

PASCAL VOC differs from Cityscapes primarily in the number of semantic classes present within each image. Images in PASCAL VOC typically contain a single foreground object class together with a background label that aggregates all other classes. This different class composition affects the learned class representations and, consequently, the information available to the defense. 
Across the evaluated attacks, the undefended models achieve ASR ranging from $12.21\%$ for NNI to $99.71\%$ for ConSeg, while RePAIR reduces the remaining ASR to between $5.41\%$ and $19.67\%$. The corresponding results are reported in Table~\ref{tab:voc_asr} and \ref{tab:voc_miou}.

\section{Conclusion}
\label{sec:conclusion}

We introduced RePAIR, a post-hoc, label-free, and attack-unaware defense that exploits surviving internal victim-class evidence to localize and selectively repair backdoor-affected pixels without retraining the compromised segmentation model. 
Across eight attack configurations on Cityscapes, RePAIR reduces the defended ASR to $0.46\%$-$7.86\%$ for DeepLabV3+ and to at most $0.54\%$ for SegFormer, with complete suppression of the measured attack success in six of the eight SegFormer settings. 
At the same time, RePAIR preserves substantial segmentation utility, achieving an average mIoU-B of $74.50\%$ for DeepLabV3+ and $72.30\%$ for SegFormer, compared with $73.28\%$ and $75.80\%$ for the corresponding undefended models. 

Beyond the overall mitigation performance, the experiments show that effective backdoor repair can be achieved without explicitly reconstructing the trigger or modifying the compromised model. 
The directed-transport formulation provides a view of the attack as a structured victim-to-target shift, while the internal representation provides a model-derived reference for reversing this shift selectively. 
Across convolutional and transformer-based architectures, these results indicate that surviving internal semantic evidence can serve as a basis for attack-unaware localization and constrained pixel-wise repair.

%\clearpage
\subsection*{AI use statement}

In this work, we used generative AI tools to assist coding, debugging,
mathematical-notation consistency, and LaTeX drafting. We did
not use generative AI tools to generate experimental measurements or to replace
human evaluation of the theoretical claims. AI-assisted outputs were reviewed for mathematical consistency, and generated
LaTeX was compiled and visually inspected. We take responsibility for
the final content of this work, including text, claims, and artifacts produced with
the aid of generative AI.

\subsection*{Ethics statement}

This work studies defenses against data-poisoning backdoors in semantic segmentation,
with emphasis on safety-critical perception. Published attacks are used only as
controlled evaluation mechanisms for the proposed defense. The paper does not propose
a new attack or expand the attacker's capabilities. Any release of code, models, or
configuration files will follow the licenses of the underlying datasets and software
and will be framed for defensive research and reproducible evaluation. The study does
not involve human-subject experiments.

\subsection*{Reproducibility statement}
We provide the implementation of RePAIR, including the model-specific feature extraction, victim-target pair selection, reference construction, localization, feature transport, and output-safety procedures. We use the publicly available Cityscapes, PASCAL VOC 2012, and ADE20K datasets and evaluate the method on DeepLabV3+ with ResNet-50 and SegFormer with MiT-B5. The attack and defense configurations, evaluation metrics, and implementation details are described in the main paper and the appendix. %The code will be uploaded on Github after acceptance.

\bibliographystyle{conference}
\bibliography{biblio}

\clearpage
\appendix
\section{Appendix}

\subsection{Theory: Full Statements and Proofs}
\label{app:theory}

In the following, we formalize the local transport operator, distinguish its inverse from the transposed transport coupling, and describe its operational interpretation for RePAIR. 
The emphasis here is on the formal domain, feasibility conditions, and proof obligations underlying the theoretical claims.

%%%
\subsubsection{Probability space and standing conventions}
\label{app:space}

Let $\Omega$ denote the evaluated pixel population and
\[
    Y:\Omega\rightarrow[K]\cup\{\varnothing\} \enspace,
    \qquad
    \Yhead,\Yrep,\Ydef:\Omega\rightarrow[K] \enspace,
\]
where $K$ is the ground-truth class, $\Yhead$ the undefended head prediction, $\Yrep$ the repaired
prediction before output safety, and $\Ydef$ the final defended prediction.
Pixels with $Y=\varnothing$ are excluded from semantic metrics.

Let $T$ denote the event that a trigger is present, and assume $\Prob(T)>0$ and
$\Prob(T^c)>0$. Define
\begin{equation}
    \Prob_{\mathrm{trig}}(\cdot)=\Prob(\cdot\mid T) 
\end{equation}
as the conditional distribution over triggered inputs, and  
\begin{equation}
    \Prob_{\mathrm{cl}}(\cdot)=\Prob(\cdot\mid T^c)
\end{equation}
as the conditional distribution over clean inputs.

For victim class $v$ and target class $t\neq v$, assume
$\Prob_{\mathrm{trig}}(Y=v)>0$ and define
\begin{equation}
    \Prob_v(B)
    :=
    \Prob_{\mathrm{trig}}(B\mid Y=v) 
    \label{eq:app_victim_measure}
\end{equation}
as the conditional probability over triggered pixels whose ground-truth class is $v$.
Let
\begin{equation}
    A:=\{\Yhead=t\}
\end{equation}
denote the event that the undefended model predicts the target class. 
We then define the baseline and defended attack success rates as 
\begin{equation}
    \ASR_{\mathrm{base}}=\Prob_v(A) \enspace,
    \qquad
    \ASR_{\mathrm{def}}=\Prob_v(\Ydef=t) \enspace.
    \label{eq:app_asr}
\end{equation}

For theoretical evaluation, $(v,t)$ denotes the true attacked victim-target
pair, which unknown to the defense. 
The attack-unaware builder either
releases an active runtime state with discovered pair
$\widehat P=(\hat v,\hat t)$ or abstains and releases an inactive fail-closed
state. 
Let $R\in\{0,1\}$ indicate whether an active state is released. 
We use the following
conditions throughout the analysis:
\begin{align}
    C_{\mathrm{pair}}
    &:\quad R=1 \text{ and } (\hat v,\hat t)=(v,t) \enspace, \\
    C_{\mathrm{tgt}}
    &:\quad R=1 \text{ and } \hat t=t \enspace.
    \label{eq:app_mapping_conditions}
\end{align}
Clearly, $C_{\mathrm{pair}}$ implies $C_{\mathrm{tgt}}$. 
All pair-conditioned
runtime quantities are constructed from the released state and therefore use
$(\hat v,\hat t)$ rather than the unknown pair $(v,t)$. 
The residual-ASR decomposition below applies to the actually executed defense state without assuming $C_{\mathrm{pair}}$. 
In contrast, statements that relate localization error to internal victim evidence for the true pair require $C_{\mathrm{pair}}$. 
Guarantees concerning the creation of the true target class require only $C_{\mathrm{tgt}}$. 
If $R=0$, fail-closed inference leaves the undefended
prediction unchanged. 

Let $\rho_v\in\{0,1\}$ denote the evaluation-side internal victim-evidence event for the true pair.
For CNNs,
$\rho_v=\indic\{\Yproto=v\}$. For transformers, $\rho_v = \indic\{p_v^{\mathrm{int}}>\tau_{\mathrm{sem}}\}$, where $\tau_{\mathrm{sem}}$ is the frozen, pair-specific clean-null threshold for internal victim support. 
In the official transformer state it is calibrated as the $0.995$ empirical quantile of image-disjoint out-of-fold victim-support scores on the final safety-approved reference population.
For an active released state, the implemented direct semantic localization arm
uses the discovered pair,
\begin{equation}
    M_{\mathrm{semantic}}
    =
    \indic\{\Yhead=\hat t\}\rho_{\hat v} \enspace.
    \label{eq:app_semantic_localizer}
\end{equation}
Here $\rho_{\hat v}$ is constructed by the same architecture-specific rule using the released pair.
Let $M:\Omega\rightarrow\{0,1\}$ denote the final localization indicator. 
The implemented masks preserve the semantic arm additively, so
\begin{equation}
    M_{\mathrm{semantic}}\leq M \enspace.
    \label{eq:app_localizer_inclusion}
\end{equation}
Under $C_{\mathrm{pair}}$, the runtime quantities satisfy
$\rho_{\hat v}=\rho_v$ and
$M_{\mathrm{semantic}}=\indic\{\Yhead=t\}\rho_v$. 
If $R=0$, we set $M_{\mathrm{semantic}}=M=0$ consistently with fail-closed inference.

To make all statements well-defined at zero prevalence, we define
\begin{equation}
\eta_{\mathrm{ref}}
=
\begin{cases}
\Prob_v(\rho_v=0\mid A), & \Prob_v(A)>0 \enspace,\\
0, & \Prob_v(A)=0\enspace,
\end{cases}
\label{eq:app_eta_ref}
\end{equation}
\begin{equation}
\eta_{\mathrm{loc}}
=
\begin{cases}
\Prob_v(M=0\mid A), & \Prob_v(A)>0\enspace,\\
0, & \Prob_v(A)=0\enspace,
\end{cases}
\label{eq:app_eta_loc}
\end{equation}
and
\begin{equation}
\varepsilon_{\mathrm{rep}}
=
\begin{cases}
\Prob_v(\Ydef=t\mid M=1,A), & \Prob_v(M=1,A)>0\enspace,\\
0, & \Prob_v(M=1,A)=0\enspace.
\end{cases}
\label{eq:app_epsilon_rep}
\end{equation}
Finally,
\begin{equation}
    \delta_{\mathrm{new}}
    =\Prob_v(\Ydef=t,A^c) \enspace.
    \label{eq:app_delta_new}
\end{equation}
These zero-prevalence conventions change no positive-mass conditional and make
all products in the subsequent bounds unambiguous.

%%%
\subsubsection{Fixed-simplex geometry}
\label{app:simplex}

The class-probability simplex and its tangent space are
\begin{align}
    \Delta^{K-1}
    &=\{p\in\mathbb R_{\geq0}^{K}:\mathbf1^\top p=1\} \enspace,\\
    \mathcal T_\Delta
    &=\{d\in\mathbb R^K:\mathbf1^\top d=0\} \enspace.
\end{align}
For pair $(v,t)$ define
\begin{equation}
    d_{vt}=e_t-e_v\in\mathcal T_\Delta \enspace,
    \label{eq:app_direction}
\end{equation}
where $e_c$ denotes the one-hot basis vector for class $c$.

\begin{lemma}[Directed victim--target displacement]
\label{lem:app_simplex}
Let $p^\star\in\Delta^{K-1}$ and let
\begin{equation}
    q=p^\star+\xi d_{vt} \enspace,
    \qquad 0\leq\xi\leq p_v^\star \enspace,
    \label{eq:app_attack_line}
\end{equation}
with $q\in\Delta^{K-1}$.
Here, $\xi$ denotes the amount of probability mass transported from the victim coordinate $v$ to the target coordinate $t$, and $d_{vt}=e_t-e_v$ is the corresponding directed victim-to-target displacement.
The constraint $0\leq\xi\leq p_v^\star$ ensures that the victim coordinate remains nonnegative.
For
$\psi_{vt}(p)=p_v-p_t$ \enspace,
\begin{equation}
    \psi_{vt}(q)=\psi_{vt}(p^\star)-2\xi \enspace.
    \label{eq:app_margin_shift}
\end{equation}
Moreover, with the sign convention positive on the victim side, the signed Euclidean distance to
$\mathcal H_{vt}=\{p:p_v=p_t\}$ is
\begin{equation}
    \operatorname{sdist}(p,\mathcal H_{vt})
    =\frac{p_v-p_t}{\sqrt2} \enspace.
    \label{eq:app_boundary_distance}
\end{equation}
\end{lemma}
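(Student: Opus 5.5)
The proof splits into two independent computations: a coordinate calculation for the margin shift, and an orthogonal-projection formula for the distance to the hyperplane $\mathcal H_{vt}$. Neither needs earlier results beyond the definitions of $d_{vt}$ and $\Delta^{K-1}$.

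\emph{Margin shift.} First I check that $q$ is well defined in the simplex. Since $\mathbf 1^\top d_{vt}=0$, we have $d_{vt}\in\mathcal T_\Delta$ and $\mathbf 1^\top q=1$. All coordinates other than $v$ and $t$ are unchanged. The constraint $\xi\le p_v^\star$ gives $q_v=p_v^\star-\xi\ge 0$, and $q_t=p_t^\star+\xi\ge 0$ holds trivially. Because $t\neq v$, the displacement has $(d_{vt})_v=-1$ and $(d_{vt})_t=+1$. Linearity of $\psi_{vt}$ then gives
\[
\psi_{vt}(q)=\psi_{vt}(p^\star)+\xi\bigl((d_{vt})_v-(d_{vt})_t\bigr)=\psi_{vt}(p^\star)-2\xi .
\]

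\emph{Signed distance.} I write $\mathcal H_{vt}=\{p:\,w^\top p=0\}$ with $w=e_v-e_t$, so $\|w\|_2=\sqrt2$. This is a linear hyperplane through the origin in $\mathbb R^K$. The standard projection formula gives the signed distance $w^\top p/\|w\|_2=(p_v-p_t)/\sqrt2$, and the chosen unit normal $w/\|w\|$ makes the sign positive on the victim side $p_v>p_t$.

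To be self-contained, I will verify the minimizer explicitly. Set $\pi(p)=p-\tfrac{w^\top p}{2}\,w$. Then $w^\top\pi(p)=0$, so $\pi(p)\in\mathcal H_{vt}$, and $p-\pi(p)$ is parallel to $w$. Since $w$ is orthogonal to $\mathcal H_{vt}$, Pythagoras shows $\pi(p)$ is the nearest point, with residual norm $|w^\top p|/\sqrt2$.

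The only step needing care is interpreting the distance relative to the simplex rather than all of $\mathbb R^K$. I will note that $w\in\mathcal T_\Delta$, so for $p\in\Delta^{K-1}$ the projection $\pi(p)$ still satisfies $\mathbf 1^\top\pi(p)=1$. Hence the distance within the affine hull of the simplex coincides with the ambient one.

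Combining both parts gives the consequence $\operatorname{sdist}(q,\mathcal H_{vt})=\operatorname{sdist}(p^\star,\mathcal H_{vt})-\sqrt2\,\xi$. So transporting mass $\xi$ moves the prediction toward the decision boundary at rate $\sqrt2$, and it crosses the boundary exactly when $\xi>(p_v^\star-p_t^\star)/2$.
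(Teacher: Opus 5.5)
Your proposal is correct and follows the same route as the paper: you get the margin identity from $q_v=p_v^\star-\xi$ and $q_t=p_t^\star+\xi$, and the signed distance from the normal $e_v-e_t$ of $\mathcal H_{vt}$ with norm $\sqrt2$. Your explicit projection $\pi(p)$ and your observation that $e_v-e_t\in\mathcal T_\Delta$ keeps the foot point in the affine hull of the simplex are valid elaborations that the paper leaves implicit; note only that your closing ``crosses the boundary'' remark presumes $p_v^\star>p_t^\star$, as in Corollary~\ref{cor:app_pair_flip}.
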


\begin{proof}
Equation~\ref{eq:app_attack_line} gives
$q_v=p_v^\star-\xi$ and $q_t=p_t^\star+\xi$, proving the margin identity.
The hyperplane normal is $e_v-e_t$, whose Euclidean norm is $\sqrt2$. 
Thus, the stated signed-distance formula follows.
\end{proof}

\begin{corollary}[Pairwise preference flip]
\label{cor:app_pair_flip}
If $p_v^\star>p_t^\star$, then the strict pairwise ordering $q_t>q_v$ holds whenever
\begin{equation}
    \xi>\frac{p_v^\star-p_t^\star}{2} \enspace.
    \label{eq:app_pair_flip}
\end{equation}
At equality the pair is tied.
\end{corollary}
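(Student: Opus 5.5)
The plan is to read the corollary directly off the margin identity in Lemma~\ref{lem:app_simplex}. By Equation~\ref{eq:app_margin_shift}, $\psi_{vt}(q)=\psi_{vt}(p^\star)-2\xi$, i.e.
\begin{equation*}
    q_v-q_t=(p_v^\star-p_t^\star)-2\xi \enspace.
\end{equation*}
The claim $q_t>q_v$ is equivalent to $\psi_{vt}(q)<0$, which by this identity is equivalent to $\xi>(p_v^\star-p_t^\star)/2$.

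The argument then has three short steps.

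\textbf{Step 1.} Substitute the hypothesis $\xi>(p_v^\star-p_t^\star)/2$ into the identity. This gives $q_v-q_t<0$, hence the strict ordering $q_t>q_v$.

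\textbf{Step 2.} In the equality case $\xi=(p_v^\star-p_t^\star)/2$, the identity gives $q_v-q_t=0$, so the pair is tied. Geometrically, by Equation~\ref{eq:app_boundary_distance}, $q$ lies exactly on the hyperplane $\mathcal H_{vt}$.

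\textbf{Step 3.} Check that the threshold lies in the feasible range $0\leq\xi\leq p_v^\star$ of Lemma~\ref{lem:app_simplex}, so the statement is not vacuous. Since $p_v^\star>p_t^\star\geq0$, we have
\begin{equation*}
    0<\frac{p_v^\star-p_t^\star}{2}\leq\frac{p_v^\star}{2}<p_v^\star \enspace.
\end{equation*}
Hence a feasible flipping $\xi$ exists, and every feasible $\xi$ in $\bigl((p_v^\star-p_t^\star)/2,\,p_v^\star\bigr]$ produces the flip. The hypothesis $p_v^\star>p_t^\star$ is used only to make the threshold positive, so that the flip genuinely reverses an initial victim preference.

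There is no real obstacle here. The only point needing care is keeping the claim pairwise. The corollary asserts only $q_t>q_v$, not $\arg\max_k q_k=t$, because a complete class change would additionally require $q_t>q_k$ for all $k\notin\{v,t\}$, and those coordinates are unchanged by $d_{vt}$. I would add a remark to this effect, consistent with the main-text discussion following Equation~\eqref{eq:theory_margin_shift}.
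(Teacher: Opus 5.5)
Your proposal is correct and follows the paper's route. The paper gives no separate proof and treats the corollary as an immediate consequence of the margin identity $\psi_{vt}(q)=\psi_{vt}(p^\star)-2\xi$ in Lemma~\ref{lem:app_simplex}, exactly as in your Steps 1--2; your feasibility check and your pairwise-versus-$\arg\max$ remark are valid additions, and the latter matches the paper's comment after the corollary.
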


The corollary concerns only the victim-target pairwise boundary. 
A complete change of $\arg\max$ additionally requires $q_t$ to exceed every other class coordinate. 
The construction is a local analytical model, i.e., RePAIR does not observe $p^\star$ at runtime.

%%%
\subsubsection{Formal transport operator and coupling}
\label{app:transport}

Section~\ref{sec:theory} introduces the local operator
$T_\xi(p)=p+\xi(e_t-e_v)$. 
Here we record its feasible inverse domain and the coupling construction used in the proofs.

Define
\begin{equation}
    \mathcal D_\xi
    =
    \{p\in\Delta^{K-1}:p_v\ge\xi\} \enspace,
    \qquad
    \mathcal R_\xi
    =
    \{q\in\Delta^{K-1}:q_t\ge\xi\} \enspace.
    \label{eq:app_transport_domains}
\end{equation}

\begin{lemma}[Local inverse operator]
\label{lem:app_inverse_operator}
For fixed $\xi\ge0$, the map
\begin{equation}
    T_\xi:\mathcal D_\xi\rightarrow\mathcal R_\xi \enspace,
    \qquad
    T_\xi(p)=p+\xi(e_t-e_v) \enspace,
\end{equation}
is bijective with
\begin{equation}
    T_\xi^{-1}(q)
    =
    q+\xi(e_v-e_t) \enspace.
    \label{eq:app_inverse_operator}
\end{equation}
\end{lemma}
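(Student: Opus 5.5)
The plan is to verify directly that $T_\xi$ maps $\mathcal D_\xi$ into $\mathcal R_\xi$, that the candidate map $S_\xi(q):=q+\xi(e_v-e_t)$ maps $\mathcal R_\xi$ into $\mathcal D_\xi$, and that the two compositions are the identity. Together these give bijectivity and the formula $T_\xi^{-1}=S_\xi$.

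\emph{Well-definedness of $T_\xi$.} Take $p\in\mathcal D_\xi$ and set $q=T_\xi(p)$. Since $d_{vt}=e_t-e_v\in\mathcal T_\Delta$, we have $\mathbf 1^\top q=\mathbf 1^\top p=1$.

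The coordinates of $q$ are as follows:
\begin{itemize}
\item $q_v=p_v-\xi\ge0$, because $p\in\mathcal D_\xi$;
\item $q_t=p_t+\xi\ge\xi\ge0$;
\item $q_k=p_k\ge0$ for every $k\notin\{v,t\}$.
\end{itemize}
Here $t\neq v$ is used, so the two modified coordinates are distinct. Hence $q\in\Delta^{K-1}$ and $q_t\ge\xi$, that is, $q\in\mathcal R_\xi$.

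\emph{Well-definedness of $S_\xi$.} Symmetrically, take $q\in\mathcal R_\xi$ and set $p=S_\xi(q)$. The coordinate sum is again preserved. We have $p_t=q_t-\xi\ge0$ by the defining constraint of $\mathcal R_\xi$, and $p_v=q_v+\xi\ge\xi$. All other coordinates are unchanged. Hence $p\in\mathcal D_\xi$.

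\emph{Inverse identities.} The compositions satisfy
\[
S_\xi(T_\xi(p))=p+\xi(e_t-e_v)+\xi(e_v-e_t)=p,
\]
and likewise $T_\xi(S_\xi(q))=q$. Therefore $T_\xi$ is a bijection $\mathcal D_\xi\to\mathcal R_\xi$ with inverse given by \eqref{eq:app_inverse_operator}.

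\emph{Main obstacle.} There is no substantive obstacle. The only point needing care is the choice of domains: nonnegativity of the victim coordinate must be checked in the forward direction and nonnegativity of the target coordinate in the reverse direction. This is exactly why $\mathcal D_\xi$ and $\mathcal R_\xi$ carry the constraints $p_v\ge\xi$ and $q_t\ge\xi$ respectively. The standing assumption $t\neq v$ ensures that the two coordinate updates do not cancel.
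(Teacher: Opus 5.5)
Your proof is correct and follows the same route as the paper. Both check that $T_\xi$ maps $\mathcal D_\xi$ into $\mathcal R_\xi$, that $q+\xi(e_v-e_t)$ maps $\mathcal R_\xi$ back into $\mathcal D_\xi$, and that the two compositions are the identity; you simply write out coordinatewise the nonnegativity and sum checks that the paper states in one line each.
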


\begin{proof}
For $p\in\mathcal D_\xi$, the victim coordinate remains nonnegative after subtracting $\xi$, while all coordinates still sum to one, so $T_\xi(p)\in\Delta^{K-1}$ and its target coordinate is at least $\xi$.
Conversely, for $q\in\mathcal R_\xi$, subtracting $\xi$ from $q_t$ and adding it to $q_v$ yields a valid simplex point in $\mathcal D_\xi$.
Direct substitution shows that the two maps are mutual inverses.
\end{proof}

\begin{lemma}[One-edge coupling]
\label{lem:transport}
Let $p\in\mathcal D_\xi$ and $q=T_\xi(p)$. Define
$\gamma_\xi\in\mathbb R_{\ge0}^{K\times K}$ by
\[
    (\gamma_\xi)_{vv}=p_v-\xi \enspace,
    \qquad
    (\gamma_\xi)_{vt}=\xi \enspace,
    \qquad
    (\gamma_\xi)_{kk}=p_k\quad(k\neq v) \enspace,
\]
with all remaining entries zero.
Then
\begin{equation}
    \gamma_\xi\mathbf1=p \enspace,
    \qquad
    \gamma_\xi^\top\mathbf1=q \enspace.
    \label{eq:app_coupling_marginals}
\end{equation}
Hence $\gamma_\xi$ is a feasible coupling between the forward marginals $p$ and $q$, while $\gamma_\xi^\top$ is a feasible coupling from $q$ back to $p$.
\end{lemma}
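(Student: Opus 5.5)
The plan is a direct verification of nonnegativity and of the two marginals, by computing row and column sums of $\gamma_\xi$ entry by entry. All of its nonzero entries sit on the diagonal, except the single off-diagonal entry $(v,t)$.

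\emph{Nonnegativity.} Since $p\in\mathcal D_\xi$, we have $p_v\ge\xi$, so $(\gamma_\xi)_{vv}=p_v-\xi\ge0$. We also have $(\gamma_\xi)_{vt}=\xi\ge0$, and $(\gamma_\xi)_{kk}=p_k\ge0$ for $k\neq v$ because $p\in\Delta^{K-1}$. All other entries are zero. Hence $\gamma_\xi\in\mathbb R_{\ge0}^{K\times K}$.

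\emph{Row sums.} For row $v$, the only nonzero entries are $(v,v)$ and $(v,t)$. Here $t\neq v$ matters, so the two entries are distinct and neither is overwritten by the diagonal rule $(\gamma_\xi)_{kk}$ with $k=t$, which lives in row $t$. The row sum is therefore $(p_v-\xi)+\xi=p_v$. For a row $k\neq v$, the only nonzero entry is $(k,k)=p_k$. This gives $\gamma_\xi\mathbf1=p$.

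\emph{Column sums.} Column $v$ contains only $(v,v)$, because the sole off-diagonal entry lies in column $t$; its sum is $p_v-\xi=q_v$. Column $t$ contains $(t,t)=p_t$ and $(v,t)=\xi$, with sum $p_t+\xi=q_t$. Any other column $k$ contains only $p_k=q_k$. Thus $\gamma_\xi^\top\mathbf1=p+\xi(e_t-e_v)=T_\xi(p)=q$, by the definition of $T_\xi$ in Lemma~\ref{lem:app_inverse_operator}.

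\emph{The two coupling statements.} A nonnegative matrix with row marginal $p$ and column marginal $q$ is by definition a coupling of $p$ and $q$; the total mass is $\mathbf1^\top p=1$. Transposing swaps the roles of rows and columns, so $\gamma_\xi^\top\mathbf1=q$ and $(\gamma_\xi^\top)^\top\mathbf1=\gamma_\xi\mathbf1=p$. Hence $\gamma_\xi^\top$ is a feasible coupling from $q$ back to $p$.

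There is no real obstacle here. The only points needing care are the bookkeeping that $t\neq v$, so the entries $(v,v)$, $(v,t)$ and $(t,t)$ are distinct and the column-$t$ sum picks up both $p_t$ and $\xi$, and the use of $p\in\mathcal D_\xi$ to guarantee that $(\gamma_\xi)_{vv}\ge0$.
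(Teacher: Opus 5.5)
Your proposal is correct and follows the paper's proof: both compute the sums of the victim row, the remaining rows, and the victim, target and remaining columns of $\gamma_\xi$, then note that transposition swaps the marginals. Your explicit checks of nonnegativity via $p_v\ge\xi$ and of the role of $t\neq v$ are small additions that the paper leaves implicit.
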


\begin{proof}
The victim row sums to $(p_v-\xi)+\xi=p_v$ and every other row to $p_k$.
The victim column sums to $p_v-\xi=q_v$, the target column to
$p_t+\xi=q_t$, and every other column to $p_k=q_k$. Transposition exchanges these marginals.
\end{proof}

The last statement concerns a joint measure, not a function on probability vectors. 
Thus $\gamma_\xi^\top$ represents reversed mass flow but is not the operator $T_\xi^{-1}$.
No optimality claim is made for either coupling without specifying a transport cost.

%%%
\subsubsection{Internal-reference survival and localization}
\label{app:reference}

\begin{lemma}[Surviving victim evidence]
\label{lem:app_reference_mass}
Under the conventions in Section~\ref{app:space},
\begin{equation}
    \Prob_v(A,\rho_v=1)
    =
    (1-\eta_{\mathrm{ref}})\ASR_{\mathrm{base}} \enspace.
    \label{eq:app_reference_mass}
\end{equation}
\end{lemma}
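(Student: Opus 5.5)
The identity follows from the chain rule for conditional probability together with the definition of $\eta_{\mathrm{ref}}$ in \eqref{eq:app_eta_ref}. Because $\eta_{\mathrm{ref}}$ is defined piecewise, the argument splits into two cases according to whether $\ASR_{\mathrm{base}}=\Prob_v(A)$ (see \eqref{eq:app_asr}) is positive or zero.

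\emph{Case $\Prob_v(A)>0$.} Here $\eta_{\mathrm{ref}}=\Prob_v(\rho_v=0\mid A)$. Since $\rho_v\in\{0,1\}$, conditional on $A$ the events $\{\rho_v=0\}$ and $\{\rho_v=1\}$ partition the space. Hence
\[
\Prob_v(\rho_v=1\mid A)=1-\eta_{\mathrm{ref}}.
\]
Multiplying by $\Prob_v(A)$ and applying $\Prob_v(A,\rho_v=1)=\Prob_v(\rho_v=1\mid A)\,\Prob_v(A)$ yields \eqref{eq:app_reference_mass}. This step requires $\Prob_v$ to be a genuine probability measure, and it is: it is defined in \eqref{eq:app_victim_measure} under the standing assumptions $\Prob(T)>0$ and $\Prob_{\mathrm{trig}}(Y=v)>0$.

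\emph{Case $\Prob_v(A)=0$.} The convention sets $\eta_{\mathrm{ref}}=0$, so the right-hand side equals $(1-0)\cdot 0=0$. For the left-hand side, monotonicity of measure gives $0\le\Prob_v(A,\rho_v=1)\le\Prob_v(A)=0$. Both sides therefore vanish.

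There is no real obstacle in this argument. The only point needing care is the zero-prevalence branch, where the conditional probability is undefined. The convention chosen in \eqref{eq:app_eta_ref} makes the product form of the identity hold trivially there, and we should note that this is precisely why that convention was adopted. We will also remark that the lemma does not need $C_{\mathrm{pair}}$, since $\rho_v$ is defined for the true pair on the evaluation side. The assumption $C_{\mathrm{pair}}$ enters only later, when the lemma is combined with \eqref{eq:app_localizer_inclusion} to derive $\eta_{\mathrm{loc}}\le\eta_{\mathrm{ref}}$.
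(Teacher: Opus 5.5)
Your proof is correct and follows the paper's own argument: the multiplication rule with $\Prob_v(\rho_v=1\mid A)=1-\eta_{\mathrm{ref}}$ when $\Prob_v(A)>0$, and both sides vanishing under the zero-prevalence convention otherwise. For the left-hand side in the zero case, your bound $\Prob_v(A,\rho_v=1)\le\Prob_v(A)=0$ is slightly more precise than the paper's ``both sides are zero by definition.''
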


\begin{proof}
If $\Prob_v(A)>0$, the multiplication rule gives
$\Prob_v(A,\rho_v=1)=\Prob_v(\rho_v=1\mid A)\Prob_v(A)$, which is the claimed expression. 
If $\Prob_v(A)=0$, both sides are zero by definition.
\end{proof}

The relevant condition is therefore not complete encoder invariance.
$\eta_{\mathrm{ref}}$ measures only the fraction of successful baseline attacks for which the direct internal victim-evidence event is absent.

\begin{proposition}[Internal evidence bounds localization misses]
\label{prop:reference_to_localization}
Assume $C_{\mathrm{pair}}$ and
Equation~\ref{eq:app_localizer_inclusion}. Then
\begin{equation}
    \eta_{\mathrm{loc}}\leq\eta_{\mathrm{ref}} \enspace.
    \label{eq:app_reference_localization_bound}
\end{equation}
\end{proposition}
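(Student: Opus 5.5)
The proof is a direct event-inclusion argument. First, I dispose of the degenerate case. If $\Prob_v(A)=0$, then $\eta_{\mathrm{loc}}=0=\eta_{\mathrm{ref}}$ by the zero-prevalence conventions in Equations~\ref{eq:app_eta_ref} and~\ref{eq:app_eta_loc}, so the inequality holds with equality. From here on I assume $\Prob_v(A)>0$, so both quantities are genuine conditional probabilities under $\Prob_v(\cdot\mid A)$.

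The key step is the pointwise inclusion of events
\[
\{M=0\}\cap A\subseteq\{\rho_v=0\}\cap A .
\]
Under $C_{\mathrm{pair}}$ we have $R=1$ and $(\hat v,\hat t)=(v,t)$. The architecture-specific construction of $\rho_{\hat v}$ then coincides with $\rho_v$, so $M_{\mathrm{semantic}}=\indic\{\Yhead=t\}\rho_v=\indic_A\,\rho_v$. Now take a pixel in $A$ with $\rho_v=1$. Then $M_{\mathrm{semantic}}=1$, and by Equation~\ref{eq:app_localizer_inclusion} also $M\ge 1$, hence $M=1$. Taking the contrapositive on $A$: if $M=0$, then $\rho_v=0$. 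This gives the inclusion.

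Monotonicity of $\Prob_v$ then yields $\Prob_v(M=0,A)\le\Prob_v(\rho_v=0,A)$. Dividing by $\Prob_v(A)>0$ gives $\eta_{\mathrm{loc}}\le\eta_{\mathrm{ref}}$. Equivalently, one can phrase this through Lemma~\ref{lem:app_reference_mass}: $\Prob_v(M=1,A)\ge\Prob_v(\rho_v=1,A)=(1-\eta_{\mathrm{ref}})\ASR_{\mathrm{base}}$, and then take complements within $A$.

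The only step requiring care is the identification $\rho_{\hat v}=\rho_v$ under $C_{\mathrm{pair}}$. For transformers, this requires that the threshold $\tau_{\mathrm{sem}}$ and the readout are the pair-specific objects built for the released pair. Since that pair equals the true pair, the evaluation-side event and the runtime event are the same function on $\Omega$. I would state this explicitly, since it is exactly where $C_{\mathrm{pair}}$ enters. Without it, only the weaker condition $C_{\mathrm{tgt}}$ holds, and the semantic arm could test evidence for the wrong victim.
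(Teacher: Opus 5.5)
Your proposal is correct and matches the paper's proof essentially step for step. Both use the zero-prevalence convention for the case $\Prob_v(A)=0$, identify $M_{\mathrm{semantic}}=\indic\{\Yhead=t\}\rho_v$ under $C_{\mathrm{pair}}$, combine this with $M_{\mathrm{semantic}}\leq M$ to get $\{M=0\}\cap A\subseteq\{\rho_v=0\}\cap A$, and then condition on $A$.
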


\begin{proof}
Under $C_{\mathrm{pair}}$, the released pair satisfies
$(\hat v,\hat t)=(v,t)$, so the implemented semantic arm is exactly $M_{\mathrm{semantic}}=\indic\{\Yhead=t\}\rho_v$.
If $\Prob_v(A)=0$, both quantities are zero by convention. Otherwise, on $A$, $\rho_v=1$ implies $M_{\mathrm{semantic}}=1$, and Equation~\ref{eq:app_localizer_inclusion} implies $M=1$. 
Therefore $\{M=0\}\cap A\subseteq\{\rho_v=0\}\cap A$. Conditioning on $A$ proves the inequality.
\end{proof}

Additional residual branches may reduce localization misses beyond the direct semantic arm but cannot invalidate this inclusion.

%%%
\subsubsection{Directed pair evidence and population separation}
\label{app:separability}

Pair discovery is architecture specific, so we first state a generic scalar separation result.
For each ordered pair $(a,b)$, let $G_i(a,b)$ be an
attack-unaware image-level evidence statistic and
\begin{equation}
    g_{ab}:=\mathbb E[G_i(a,b)] \enspace.
\end{equation}
Given $n$ builder images for which the statistic is defined, let
\begin{equation}
    \widehat g_{ab}
    :=
    \frac{1}{n}\sum_{i=1}^{n}G_i(a,b)
\end{equation}
denote the corresponding empirical statistic.
For analysis only, let $\nu_{ab}$ denote the expected value of the same statistic under a corresponding no-backdoor reference population and define
\begin{equation}
    \beta_{ab}:=g_{ab}-\nu_{ab} \enspace,
    \qquad
    g_{ab}=\nu_{ab}+\beta_{ab} \enspace.
    \label{eq:app_flow_decomposition}
\end{equation}
Neither $\nu_{ab}$ nor $\beta_{ab}$ is observed by the defense.

For CNNs, one primitive statistic is
\begin{equation}
    G_i^{\mathrm{CNN}}(a,b)
    =
    \frac{1}{|\Omega_i|}
    \sum_{x\in\Omega_i}
    \indic\{\Yproto(x)=a\wedge\Yhead(x)=b\} \enspace.
\end{equation}
For transformers, one primitive OOF statistic is a fixed upper-quantile summary of internal victim support on head-target pixels,
\begin{equation}
    G_i^{\mathrm{TF}}(a,b)
    =
    Q_\tau\!
    \left(
        p_a^{\mathrm{int}}(x):\Yhead(x)=b
    \right),
    \label{eq:app_sf_pair_population_score}
\end{equation}
when the eligible set is nonempty. Pairs for which this primitive statistic is undefined are outside the scope of the scalar proposition below.
The full implementation also uses support, structural, and held-out mechanism evidence.

\begin{proposition}[Sufficient scalar pair separation]
\label{prop:app_pair_separation}
Suppose $(v,t)$ is the true attacked pair and, for every competitor
$(a,b)\neq(v,t)$,
\begin{equation}
    \left|
        (\widehat g_{vt}-\widehat g_{ab})
        -(g_{vt}-g_{ab})
    \right|
    \leq\delta_{\mathrm{est}} \enspace.
    \label{eq:app_pair_error_assumption}
\end{equation}
If
\begin{equation}
    \beta_{vt}
    -\max_{(a,b)\neq(v,t)}\beta_{ab}
    >
    \max_{(a,b)\neq(v,t)}(\nu_{ab}-\nu_{vt})
    +\delta_{\mathrm{est}} \enspace,
    \label{eq:app_pair_separation}
\end{equation}
then $\widehat g_{vt}>\widehat g_{ab}$ for every competitor.
\end{proposition}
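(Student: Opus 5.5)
Fix an arbitrary competitor $(a,b)\neq(v,t)$. The plan is to show that the population gap $g_{vt}-g_{ab}$ is larger than $\delta_{\mathrm{est}}$. The concentration assumption in Equation~\ref{eq:app_pair_error_assumption} then transfers this positivity to the empirical gap. The argument is purely deterministic and uses no probabilistic machinery beyond the stated error bound.

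\textbf{Step 1: decompose the population gap.} Using the decomposition in Equation~\ref{eq:app_flow_decomposition}, write
\[
g_{vt}-g_{ab}=(\beta_{vt}-\beta_{ab})-(\nu_{ab}-\nu_{vt}).
\]
Bound each term by the worst case over all competitors:
\[
\beta_{vt}-\beta_{ab}\ge\beta_{vt}-\max_{(a',b')\neq(v,t)}\beta_{a'b'},
\qquad
\nu_{ab}-\nu_{vt}\le\max_{(a',b')\neq(v,t)}(\nu_{a'b'}-\nu_{vt}).
\]
Combining these with the separation hypothesis in Equation~\ref{eq:app_pair_separation} gives the strict inequality $g_{vt}-g_{ab}>\delta_{\mathrm{est}}$.

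\textbf{Step 2: transfer to the empirical gap.} By Equation~\ref{eq:app_pair_error_assumption},
\[
\widehat g_{vt}-\widehat g_{ab}\ge (g_{vt}-g_{ab})-\delta_{\mathrm{est}}>0.
\]
Since the competitor $(a,b)$ was arbitrary, the claim holds for every competitor.

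\textbf{Expected obstacle.} No step is technically hard. The only care needed is bookkeeping. First, the maxima range over the finite set of ordered pairs for which the statistic is defined, so they exist; pairs with undefined statistics are excluded by the scope remark preceding the proposition. Second, the strict inequality in the hypothesis must survive both worst-case relaxations, which it does because those relaxations are non-strict. I would also note that $\delta_{\mathrm{est}}$ is assumed uniform over competitors. Any high-probability version, for example one obtained from Hoeffding's inequality with a union bound over $K(K-1)$ pairs, would only change the event on which the statement holds, not this deterministic argument.
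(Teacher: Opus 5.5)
Your proof is correct and follows the paper's argument: the same decomposition $g_{vt}-g_{ab}=(\beta_{vt}-\beta_{ab})-(\nu_{ab}-\nu_{vt})$, the separation condition giving $g_{vt}-g_{ab}>\delta_{\mathrm{est}}$, and the uniform error bound transferring this to $\widehat g_{vt}-\widehat g_{ab}>0$. You also write out the worst-case bounds on each term, which the paper's proof leaves implicit.
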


\begin{proof}
For any competitor,
\[
    g_{vt}-g_{ab}
    =
    (\beta_{vt}-\beta_{ab})-(\nu_{ab}-\nu_{vt}) \enspace.
\]
The stated condition implies $g_{vt}-g_{ab}>\delta_{\mathrm{est}}$ for every
competitor. Equation~\ref{eq:app_pair_error_assumption} then gives $\widehat g_{vt}-\widehat g_{ab}>0$.
\end{proof}

The result is sufficient, not necessary, and is conditional on the stated uniform score-difference error bound. It should not be read as a finite-sample guarantee for the complete transformer mapping selector, whose family and direction decisions use multiple evidence views rather than a single scalar pair ranking.

%%%
\subsubsection{Transformer mapping release contract}
\label{app:sf_mapping_contract}

Transformer mapping is modeled as a partial release rule over attack-unaware
builder evidence.
Let $\mathcal F$ denote the set of feasible undirected class
families. Pair-owned out-of-fold evidence provides structural, mechanism, and family verification views for each candidate family.
These views define a stable rank prior together with a proposal set
\begin{equation}
\mathcal F_{\mathrm{prop}}
=
\mathcal C_{S}
\cup
\mathcal C_{M}
\cup
\mathcal C_{V} \enspace,
\label{eq:app_sf_family_proposals}
\end{equation}
where $\mathcal C_{S}$, $\mathcal C_{M}$, and $\mathcal C_{V}$ are the stable candidate cores induced by the corresponding evidence views.

If $\mathcal F_{\mathrm{prop}}=\varnothing$, the builder releases no mapping and returns $(R,\widehat P)=(0,\bot)$. Otherwise, let $F_{\mathrm{prior}}\in\mathcal F_{\mathrm{prop}}$ denote the leading family
under the frozen stable rank prior.
Let \begin{equation}
\mathsf V_{\mathrm{base}}(F)\in\{0,1\}
\end{equation}
denote the frozen candidate viability test. 
The builder preserves $F_{\mathrm{prior}}$ whenever
$\mathsf V_{\mathrm{base}}(F_{\mathrm{prior}})=1$.

If the prior family is not viable, a different family can be considered only through the rescue path.
Let
\begin{equation}
\mathsf V_{\mathrm{res}}(F)\in\{0,1\}
\end{equation}
indicate that $F$ has the required bilateral leave-one-out support together with bilateral structural consistency. 
The frozen family is therefore
\begin{equation}
\widehat F
=
\begin{cases}
F_{\mathrm{prior}},
&
\mathsf V_{\mathrm{base}}(F_{\mathrm{prior}})=1 \enspace,
\\[1ex]
\mathsf{Select}_{\mathrm{rank}}
\left(
\left\{
F\in\mathcal F_{\mathrm{prop}}
:
\mathsf V_{\mathrm{res}}(F)=1
\right\}
\right) ,
&
\mathsf V_{\mathrm{base}}(F_{\mathrm{prior}})=0
\text{ and the set is nonempty} \enspace,
\\[1ex]
\bot,
&
\text{otherwise} \enspace.
\end{cases}
\label{eq:app_sf_family_release}
\end{equation}
Here $\mathsf{Select}_{\mathrm{rank}}$ denotes selection according to the frozen family ranking. 
No family is released when the required viability evidence is absent.

Assume now that $\widehat F\neq\bot$. The two orientations of
$\widehat F$ form the directed candidate set $\mathcal D(\widehat F)$.
For each
$e\in\mathcal D(\widehat F)$, let
\begin{equation}
D_{\mathrm{FE}}(e) \enspace,
\quad
D_{\mathrm{FV}}(e) \enspace,
\quad
D_{\mathrm{M}}(e)
\in\{0,1\}
\end{equation}
denote the FullEdge, FamilyVerify, and Mechanism direction votes.
Let $E_{\mathrm{M}}\in\{0,1\}$ indicate whether the Mechanism direction evidence
passes its frozen response and endpoint balance requirements.

The regular direction score is
\begin{equation}
V_{\mathrm{dir}}(e)
=
D_{\mathrm{FE}}(e)
+
D_{\mathrm{FV}}(e)
+
E_{\mathrm{M}}D_{\mathrm{M}}(e) \enspace.
\label{eq:app_sf_direction_votes}
\end{equation}
A regular direction is released only when one orientation
$e^\star\in\mathcal D(\widehat F)$ is the unique orientation satisfying
\begin{equation}
V_{\mathrm{dir}}(e^\star)\geq 2 \enspace.
\label{eq:app_sf_direction_release}
\end{equation}

A separate rescue rule is available only when the Mechanism direction channel
is ineligible. 
Let $D_{\mathrm{S}}(e)\in\{0,1\}$ denote structural corroboration. Structural
evidence is not a regular direction vote.
If $E_{\mathrm{M}}=0$, FullEdge and FamilyVerify disagree, and structural evidence
corroborates the FamilyVerify orientation, the FamilyVerify orientation may be
released through the rescue rule provided that the frozen family itself
satisfies the rescue viability condition.

Let $e^\star$ denote the unique directed orientation accepted by either the
regular rule or the rescue rule.
The released transformer mapping state is
\begin{equation}
(R,\widehat P)
=
\begin{cases}
(1,e^\star),
&
\widehat F\neq\bot
\text{ and a unique direction is accepted} \enspace,
\\[1ex]
(0,\bot),
&
\text{otherwise} \enspace.
\end{cases}
\label{eq:app_sf_mapping_release}
\end{equation}

\begin{proposition}[Transformer fail-closed mapping contract]
\label{prop:app_sf_mapping_contract}
The transformer builder releases a directed pair only if a viable family is
frozen and one of its two orientations satisfies the regular direction rule or
the rescue rule above.
If family viability is insufficient, if direction evidence is insufficient, or if the accepted orientation is not unique, then $R=0$ and $\widehat P=\bot$.
Once the direction firewall has rejected the frozen family, no later family fallback can create an active mapping.
\end{proposition}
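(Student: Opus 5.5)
The proof is a direct case analysis on the release rule in Equation~\ref{eq:app_sf_mapping_release}, together with the family-freezing rule in Equation~\ref{eq:app_sf_family_release}. No probabilistic estimate is needed. The claim is purely about the logical structure of the builder, so each assertion should follow by inspecting which branch of the piecewise definitions is active.

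\textbf{Step 1: necessity of release.} Suppose $R=1$. By Equation~\ref{eq:app_sf_mapping_release}, this happens only in the first branch. That branch requires $\widehat F\neq\bot$ and a unique accepted orientation $e^\star\in\mathcal D(\widehat F)$.

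Next, $\widehat F\neq\bot$ forces one of the first two cases of Equation~\ref{eq:app_sf_family_release}. In the first case, $\mathsf V_{\mathrm{base}}(F_{\mathrm{prior}})=1$. In the second, $\widehat F$ is selected from a nonempty set of families with $\mathsf V_{\mathrm{res}}=1$. Either way, the frozen family is viable in the sense defined there.

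Finally, by the definition of $e^\star$, the orientation was accepted by either the regular rule or the rescue rule. The regular rule means $e^\star$ is the unique orientation with $V_{\mathrm{dir}}\ge 2$, as in Equation~\ref{eq:app_sf_direction_release}. The rescue rule requires $E_{\mathrm M}=0$, FullEdge and FamilyVerify disagreeing, structural corroboration of the FamilyVerify orientation, and rescue viability of $\widehat F$. This proves the first sentence of the statement.

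\textbf{Step 2: the fail-closed cases.} I would take the contrapositive of Step 1 and split into three cases.
\begin{itemize}
\item If family viability fails, then $\widehat F=\bot$ (including the case $\mathcal F_{\mathrm{prop}}=\varnothing$), and the second branch of Equation~\ref{eq:app_sf_mapping_release} gives $(0,\bot)$.
\item If no orientation meets either rule, the first branch's condition fails.
\item If acceptance is not unique, the same holds.
\end{itemize}

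The non-uniqueness case needs one small check: the two acceptance paths cannot produce conflicting outputs that together count as ``unique.'' Within the regular rule, uniqueness is built into Equation~\ref{eq:app_sf_direction_release}. Across the two rules, I would argue as follows. The rescue rule applies only when $E_{\mathrm M}=0$ and FE and FV disagree. Under those conditions each orientation has $V_{\mathrm{dir}}(e)\le 1$: the score reduces to $D_{\mathrm{FE}}(e)+D_{\mathrm{FV}}(e)$, and disagreement means exactly one vote per orientation, assuming each vote is cast for a single orientation. So the regular rule accepts nothing, and the rescue rule nominates the single FamilyVerify orientation. Hence, whenever an accepted orientation exists, it is unique or the state is $(0,\bot)$.

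\textbf{Step 3: no later fallback.} Equation~\ref{eq:app_sf_mapping_release} depends on $\widehat F$ only, and $\widehat F$ is determined once by Equation~\ref{eq:app_sf_family_release} before any direction vote is cast. The set $\mathcal D(\widehat F)$ consists of the orientations of that single family. There is therefore no branch that reselects a family after direction rejection. If direction fails for $\widehat F$, the output is $(0,\bot)$.

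\textbf{Main obstacle.} The difficulty is not mathematical depth but making the informal description airtight. The release rules are partly stated in prose, so the proof will adopt them as the formal definitions. The points to fix explicitly are that the rescue direction rule requires rescue viability of $\widehat F$ itself, and that each direction channel votes for at most one orientation. Both should be stated as part of the model before concluding.
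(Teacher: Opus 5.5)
Your proposal is correct and follows essentially the same route as the paper, whose proof likewise unfolds Equation~\ref{eq:app_sf_family_release} (a viable frozen family or $\bot$) and Equation~\ref{eq:app_sf_mapping_release} ($R=1$ only for exactly one authorized orientation), notes that structural evidence does not enter $V_{\mathrm{dir}}$, and appeals to the absence of a later family fallback. Your extra cross-rule check, showing $V_{\mathrm{dir}}\le 1$ under the rescue conditions and relying on your added single-orientation-vote assumption, is not needed for the stated claim, since non-uniqueness already falls into the $(0,\bot)$ branch by definition; it does usefully confirm that the definition of $e^\star$ is well posed.
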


\begin{proof}
Equation~\ref{eq:app_sf_family_release} returns either a viable frozen family
or $\bot$. Conditional on a viable family, Equation~\ref{eq:app_sf_mapping_release} sets $R=1$ only when exactly one orientation satisfies an authorized direction rule. 
Every other case returns $(R,\widehat P)=(0,\bot)$. Structural evidence can corroborate the rescue rule
but does not contribute to the regular direction vote in Equation~\ref{eq:app_sf_direction_votes}. 
The absence of a later family fallback completes the fail-closed alternative.
\end{proof}

This proposition is a deterministic release contract for the observable
builder state. 
It does not claim that the released pair equals the unknown true attacked pair. 
Ground truth, attack provenance, and the oracle attacked pair are not inputs to the release rule.

%%%
\subsubsection{Repair-state safety}
\label{app:safety}

Let $\mathcal C_0$ be the provisional reference population after architecture-specific cross-fitting and let $V_i\in\{0,1\}$ denote the victim-safety veto.
Define
\begin{equation}
    \mathcal C_{\mathrm{safe}}
    =
    \{i\in\mathcal C_0:V_i=0\} \enspace.
    \label{eq:app_safe_set}
\end{equation}

\begin{proposition}[One-way safety]
\label{prop:app_one_way}
Every later capacity-controlled repair bank satisfies
\begin{equation}
    \mathcal C_{\mathrm{bank}}
    \subseteq
    \mathcal C_{\mathrm{safe}}
    \subseteq
    \mathcal C_0 \enspace.
    \label{eq:app_one_way}
\end{equation}
Consequently, a vetoed reference cannot be reintroduced after the safety decision.
\end{proposition}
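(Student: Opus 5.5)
The argument is structural: it follows the construction order of the builder rather than any probabilistic estimate. First I would fix the pipeline order as described in the Method section. The provisional population $\mathcal C_0$ is produced by cross-fitting, and the veto $V_i$ is then evaluated exactly once per image $i\in\mathcal C_0$. Next, $\mathcal C_{\mathrm{safe}}$ is formed by Equation~\ref{eq:app_safe_set}, and only then are the runtime objects built: semantic references, probability statistics, feature-space banks and repair calibrations. The second inclusion $\mathcal C_{\mathrm{safe}}\subseteq\mathcal C_0$ is immediate from Equation~\ref{eq:app_safe_set}, since $\mathcal C_{\mathrm{safe}}$ is defined as a subset of $\mathcal C_0$ by a filtering predicate.

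For the first inclusion, I would model every later capacity-controlled bank as the output of a finite sequence of operations $\mathcal C^{(0)}=\mathcal C_{\mathrm{safe}},\ \mathcal C^{(j+1)}=\Phi_j(\mathcal C^{(j)})$. Each $\Phi_j$ is a selection or truncation operator, such as a support filter, coverage check, capacity cap or subsampling, and it satisfies $\Phi_j(S)\subseteq S$. This monotonicity is exactly what the method text commits to: rejected images are not reintroduced to reach a desired reference-set size, and thresholds are not relaxed. An induction on $j$ then gives $\mathcal C^{(j)}\subseteq\mathcal C_{\mathrm{safe}}$ for all $j$, so $\mathcal C_{\mathrm{bank}}\subseteq\mathcal C_{\mathrm{safe}}$. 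If the coverage check fails, the state is inactive and the bank is empty, which trivially satisfies the inclusion.

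The consequence follows from the chain of inclusions. If $V_i=1$, then $i\notin\mathcal C_{\mathrm{safe}}$, and hence $i\notin\mathcal C^{(j)}$ for every $j$, so no later stage contains a vetoed image.

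The main obstacle is that the claim is really a property of the implemented construction, not of mathematics alone. The key step is to state precisely, as a standing assumption, that every post-veto operator is contractive ($\Phi_j(S)\subseteq S$) and draws only from its input set, never from $\mathcal C_0$ or the raw pool. I would first try to verify this against the builder description and Appendix~\ref{app:method_details}. Any step that resamples from the unlabeled pool, for example to refill a capacity budget, would break the argument. So I would make explicit that the capacity control only truncates, and that fail-closed abstention is the response to insufficient size.
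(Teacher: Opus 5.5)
Your proof is correct and follows the paper's argument. The paper also takes the second inclusion from the definition of $\mathcal C_{\mathrm{safe}}$ and treats the first as the construction rule for all later banks; your induction over selection steps simply makes that rule explicit. One refinement: after a compressed core fails, the paper's capacity fallback may retry on the larger, already safety-approved reservoir, which violates the per-step condition $\Phi_j(S)\subseteq S$. The invariant you actually need, and which the fallback respects, is that every stage draws only from sets already contained in $\mathcal C_{\mathrm{safe}}$, e.g.\ $\mathcal C^{(j+1)}\subseteq\bigcup_{i\le j}\mathcal C^{(i)}$.
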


\begin{proof}
The first inclusion is the construction rule for all later banks and fixed points. 
The second follows directly from the definition of
$\mathcal C_{\mathrm{safe}}$.
\end{proof}

This guarantees monotone selection, not true clean provenance.
If required reference count or semantic coverage cannot be maintained, the method abstains.

%%%
\subsubsection{Orthogonal semantic matching}
\label{app:feature_geometry}

Let $u(x)$ be the architecture-specific semantic coordinate used by the
probability reference. 
Assume distinct victim and target centers $\mu_v^u\neq\mu_t^u$ and define
\begin{equation}
    a_{vt}
    =
    \frac{\mu_t^u-\mu_v^u}{\|\mu_t^u-\mu_v^u\|_2} \enspace,
    \qquad
    \Pi_{vt}^{\perp}=I-a_{vt}a_{vt}^\top \enspace.
\end{equation}

\begin{lemma}[Orthogonal matching ignores pair-axis displacement]
\label{lem:app_orthogonal}
For semantic coordinates $z,r$ and scalar $c$,
\begin{equation}
    \Pi_{vt}^{\perp}[(z+ca_{vt})-r]
    =
    \Pi_{vt}^{\perp}(z-r) \enspace.
    \label{eq:app_orthogonal_invariance}
\end{equation}
\end{lemma}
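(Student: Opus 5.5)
The identity follows from linearity of $\Pi_{vt}^{\perp}$ together with the fact that it annihilates the pair axis $a_{vt}$. I will first check that $a_{vt}$ is well defined and has unit length. By assumption $\mu_v^u\neq\mu_t^u$, so $\|\mu_t^u-\mu_v^u\|_2>0$. Hence $a_{vt}$ is a unit vector and $a_{vt}^\top a_{vt}=1$.

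Next I will show that the projector kills the pair axis. Since $\Pi_{vt}^{\perp}=I-a_{vt}a_{vt}^\top$ is a linear map, applying it to $a_{vt}$ gives
\[
\Pi_{vt}^{\perp}a_{vt}=a_{vt}-a_{vt}(a_{vt}^\top a_{vt})=a_{vt}-a_{vt}=0 .
\]

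Finally I will expand the left-hand side of Equation~\ref{eq:app_orthogonal_invariance} by linearity:
\[
\Pi_{vt}^{\perp}\bigl[(z+ca_{vt})-r\bigr]=\Pi_{vt}^{\perp}(z-r)+c\,\Pi_{vt}^{\perp}a_{vt}=\Pi_{vt}^{\perp}(z-r).
\]
This is exactly the claimed identity.

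There is no real obstacle in this argument. The only point needing care is the unit-norm normalization of $a_{vt}$, which is what makes $\Pi_{vt}^{\perp}$ an orthogonal projector with kernel spanned by $a_{vt}$, and this is guaranteed by the distinct-centers assumption.

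If desired, I will add a remark on the consequence for matching. Because the projected difference does not depend on $c$, any Euclidean distance computed after projection is also unchanged. So the nearest-neighbor retrieval in the orthogonal complement is unaffected by displacement along the victim–target axis.
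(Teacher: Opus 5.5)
Your proof is correct and follows the paper's own argument: the unit norm of $a_{vt}$ gives $\Pi_{vt}^{\perp}a_{vt}=0$, and linearity then removes the $c\,a_{vt}$ term. You also state explicitly the linearity step and the role of the distinct-centers assumption, both of which the paper leaves implicit.
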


\begin{proof}
Since $\|a_{vt}\|_2=1$,
$\Pi_{vt}^{\perp}a_{vt}=(I-a_{vt}a_{vt}^\top)a_{vt}=0$.
\end{proof}

Thus orthogonal retrieval is exactly invariant to displacement along the
chosen victim--target axis.
It does not assume that a real attack lies entirely on that axis.

%%%
\subsubsection{Minimal victim-target logit correction.}
\label{app:projection}

Consider a residual pair-logit substep applied after any feature transport.
For pre-correction logits $\ell\in\mathbb R^K$, let
$q=\operatorname{softmax}(\ell)$ and define
\begin{equation}
    m_\ell
    =
    \ell_v-\ell_t
    =
    \log\frac{q_v}{q_t} \enspace.
\end{equation}
Let $m_{vt}^{\mathrm{rep}}>0$ be the frozen required victim-over-target margin, calibrated from safety-approved victim reference pixels.

\begin{proposition}[Minimum-$\ell_2$ full pair correction]
\label{prop:minimal_pair_projection}
Among corrections $\Delta\in\mathbb R^K$ satisfying
\begin{align}
    \Delta_k&=0 \enspace, \quad (k\notin\{v,t\})\enspace,\\
    \Delta_v+\Delta_t&=0\enspace,\\
    (\ell_v+\Delta_v)-(\ell_t+\Delta_t)
    &\geq m_{vt}^{\mathrm{rep}} \enspace,
\end{align}
the unique minimum-$\ell_2$ correction is
\begin{equation}
    \Delta_v=\frac d2 \enspace,
    \qquad
    \Delta_t=-\frac d2 \enspace,
    \qquad
    d=[m_{vt}^{\mathrm{rep}}-(\ell_v-\ell_t)]_+ \enspace.
    \label{eq:app_minimal_projection}
\end{equation}
\end{proposition}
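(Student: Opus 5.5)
The constraints make this a one-dimensional problem, so I will reduce it to one scalar and solve that directly. Any feasible $\Delta$ has $\Delta_k=0$ for $k\notin\{v,t\}$ and $\Delta_t=-\Delta_v$. Writing $s:=\Delta_v$, the feasible set is exactly $\{s e_v - s e_t : s\in\mathbb R\}$, and
\[
\|\Delta\|_2^2 = 2s^2 .
\]
The margin constraint becomes $(\ell_v+s)-(\ell_t-s)\ge m_{vt}^{\mathrm{rep}}$, that is,
\[
2s \ge m_{vt}^{\mathrm{rep}}-(\ell_v-\ell_t)=:c .
\]
So the problem is to minimize $2s^2$ subject to $s\ge c/2$.

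I will split into two cases according to the sign of $c$.

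If $c\le 0$, the current margin already meets the requirement. Then $s=0$ is feasible and is the unique global minimizer of $2s^2$. This gives $d=[c]_+=0$ and $\Delta=0$.

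If $c>0$, the feasible half-line $[c/2,\infty)$ excludes $0$. Since $s\mapsto 2s^2$ is strictly increasing on $[0,\infty)$ and every feasible $s$ is positive, the minimum is attained uniquely at the endpoint $s=c/2=d/2$.

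Both cases together give $\Delta_v=d/2$ and $\Delta_t=-d/2$ with $d=[m_{vt}^{\mathrm{rep}}-(\ell_v-\ell_t)]_+$, as claimed in Equation~\ref{eq:app_minimal_projection}.

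For uniqueness in $\mathbb R^K$, I will note that the map $s\mapsto s(e_v-e_t)$ is injective, since $v\ne t$. Hence uniqueness of the scalar minimizer transfers directly to uniqueness of $\Delta$.

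There is no real obstacle here. The only step that needs care is stating explicitly that the equality constraints reduce the feasible set to a line and that uniqueness transfers back to $\Delta$. As a consistency check, the result can also be read as the Euclidean projection of $0$ onto a closed convex set, namely the half-line intersected with the span of $e_v-e_t$. The hypothesis $m_{vt}^{\mathrm{rep}}>0$ is not needed for the argument itself.
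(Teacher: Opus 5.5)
Your proof is correct and follows the paper's argument exactly: it uses the equality constraints to reduce the problem to the scalar $\Delta_v$ with $\|\Delta\|_2^2=2\Delta_v^2$ and feasibility $2\Delta_v\ge r$, then splits on the sign of $r$. Your explicit check that uniqueness transfers back to $\Delta$ via $v\neq t$ is a small addition that the paper leaves implicit.
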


\begin{proof}
Write $r=m_{vt}^{\mathrm{rep}}-(\ell_v-\ell_t)$. The zero-sum constraint gives
$\Delta_t=-\Delta_v$, so feasibility is $2\Delta_v\geq r$ and $\|\Delta\|_2^2=2\Delta_v^2$.
If $r\leq0$, $\Delta_v=0$ is feasible and uniquely minimizes the norm. If $r>0$, the smallest feasible value is $\Delta_v=r/2$.
Combining both cases yields Equation~\ref{eq:app_minimal_projection}.
\end{proof}

\begin{corollary}[Effect of the full pair correction on odds]
Let $q$ and $q'$ be the softmax distributions immediately before and after the full correction in Proposition~\ref{prop:minimal_pair_projection}.
Then
\begin{equation}
    \frac{q'_v/q'_t}{q_v/q_t}=e^d \enspace.
    \label{eq:app_odds_change}
\end{equation}
\end{corollary}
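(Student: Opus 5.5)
The statement is the corollary following Proposition~\ref{prop:minimal_pair_projection}: the full pair correction multiplies the victim-over-target odds by exactly $e^d$. The plan is to work directly with the softmax log-odds, which depend only on the difference of the two logits. The softmax normalizer cancels in any coordinate ratio, so for any logits $\ell$ we have $q_v/q_t=\exp(\ell_v-\ell_t)=e^{m_\ell}$. This is the identity already recorded in the definition of $m_\ell$.

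Let $\ell'=\ell+\Delta$ be the post-correction logits, with $\Delta$ given by Equation~\ref{eq:app_minimal_projection}. Then $q'=\operatorname{softmax}(\ell')$. The first step is to compute the new pair margin:
\[
\ell'_v-\ell'_t=(\ell_v+d/2)-(\ell_t-d/2)=m_\ell+d .
\]
Applying the normalizer-free identity to $\ell'$ gives $q'_v/q'_t=e^{m_\ell+d}$. Dividing by $q_v/q_t=e^{m_\ell}$ then yields $e^d$.

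The one point to check explicitly is that the ratios are well defined, since the corollary divides by $q_t$ and $q'_t$. Softmax outputs are strictly positive for finite logits, so no zero-prevalence convention is needed.

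It is also worth noting that the identity holds in both cases of the $[\cdot]_+$. When $d=0$, no correction is applied and the odds ratio equals $1=e^0$, consistent with the formula. There is no real obstacle here; the only subtlety is the cancellation of the partition function. That cancellation holds even though the normalizer itself changes, because $\Delta$ is zero-sum only on the pair and zero elsewhere.
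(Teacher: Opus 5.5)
Your proof is correct and matches the paper's argument, which the paper leaves implicit: since $\log(q_v/q_t)=\ell_v-\ell_t$, the correction shifts the log-odds by $\Delta_v-\Delta_t=d$. One small fix to your closing remark: the normalizer cancels within each ratio for any logit perturbation, so the zero-sum and zero-elsewhere structure of $\Delta$ plays no role here; only $\Delta_v-\Delta_t=d$ matters.
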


The actual runtime may use a branch-dependent fraction of this residual correction and may subsequently apply a fail-closed projection. 
Feature transport can also change logits of classes outside $\{v,t\}$ indirectly, i.e., the ``pair-only'' statement applies only to this explicit residual-logit substep.

%%%
\subsubsection{Residual attack success}
\label{app:residual}

\begin{proposition}[Residual-ASR bound]
\label{prop:residual_asr}
Under the definitions in Section~\ref{app:space},
\begin{equation}
    \ASR_{\mathrm{def}}
    \leq
    \ASR_{\mathrm{base}}
    \left[
        \eta_{\mathrm{loc}}
        +(1-\eta_{\mathrm{loc}})\varepsilon_{\mathrm{rep}}
    \right]
    +\delta_{\mathrm{new}} \enspace.
    \label{eq:app_residual_asr}
\end{equation}
\end{proposition}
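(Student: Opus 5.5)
We decompose the event $\{\Ydef=t\}$ under the victim measure $\Prob_v$ of Equation~\ref{eq:app_victim_measure} according to whether the baseline attack succeeded and whether the pixel was localized. The events $A\cap\{M=0\}$, $A\cap\{M=1\}$, and $A^c$ partition $\Omega$. Hence
\begin{equation*}
    \ASR_{\mathrm{def}}
    =\Prob_v(\Ydef=t,A,M=0)+\Prob_v(\Ydef=t,A,M=1)+\Prob_v(\Ydef=t,A^c) \enspace.
\end{equation*}
The third term is exactly $\delta_{\mathrm{new}}$ by Equation~\ref{eq:app_delta_new}, so the task reduces to bounding the first two terms.

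For the first term we drop the event $\{\Ydef=t\}$ to obtain $\Prob_v(\Ydef=t,A,M=0)\le\Prob_v(A,M=0)$. When $\Prob_v(A)>0$, the multiplication rule gives $\Prob_v(A,M=0)=\eta_{\mathrm{loc}}\Prob_v(A)=\eta_{\mathrm{loc}}\ASR_{\mathrm{base}}$, using Equation~\ref{eq:app_eta_loc}.

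For the second term, when $\Prob_v(M=1,A)>0$ we write $\Prob_v(\Ydef=t,A,M=1)=\varepsilon_{\mathrm{rep}}\Prob_v(A,M=1)$, using Equation~\ref{eq:app_epsilon_rep}. We then use $\Prob_v(A,M=1)=\Prob_v(A)-\Prob_v(A,M=0)=(1-\eta_{\mathrm{loc}})\ASR_{\mathrm{base}}$. Summing the two bounds gives $\ASR_{\mathrm{base}}[\eta_{\mathrm{loc}}+(1-\eta_{\mathrm{loc}})\varepsilon_{\mathrm{rep}}]$, and adding $\delta_{\mathrm{new}}$ yields Equation~\ref{eq:app_residual_asr}. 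This argument does not use $C_{\mathrm{pair}}$. It covers the fail-closed case $R=0$ automatically: there $M=0$, so $\eta_{\mathrm{loc}}=1$ whenever $\Prob_v(A)>0$, and the bound reads $\ASR_{\mathrm{def}}\le\ASR_{\mathrm{base}}+\delta_{\mathrm{new}}$.

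The only real obstacle is the set of zero-prevalence conventions, and we handle them by cases, in the same way as Lemma~\ref{lem:app_reference_mass}.
\begin{itemize}
\item If $\Prob_v(A)=0$, the first two terms vanish because they are probabilities of subsets of $A$. The right-hand side is then $0\cdot[\cdots]+\delta_{\mathrm{new}}$, and since $\eta_{\mathrm{loc}}=0$ and $\varepsilon_{\mathrm{rep}}\in[0,1]$ the bracket is finite, so the bound holds.
\item If $\Prob_v(A)>0$ but $\Prob_v(M=1,A)=0$, then $\varepsilon_{\mathrm{rep}}=0$ by convention. The second term is zero, and the first-term bound still holds with equality in the multiplication rule.
\end{itemize}
In each case the conventions make the corresponding products equal to the true joint masses or bound them from above, so the inequality holds in every case.
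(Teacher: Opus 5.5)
Your proof is correct and follows the paper's argument. Like the paper, you split $\{\Ydef=t\}$ by $A$ and $A^c$ and then split $A$ by $M$, identify the $A^c$ piece as $\delta_{\mathrm{new}}$, bound the unlocalized piece by $\eta_{\mathrm{loc}}\ASR_{\mathrm{base}}$, write the localized piece exactly as $(1-\eta_{\mathrm{loc}})\varepsilon_{\mathrm{rep}}\ASR_{\mathrm{base}}$, and handle the zero-prevalence conventions by cases.
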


\begin{proof}
Partition $\{\Ydef=t\}$ into $A$ and $A^c$. The $A^c$ contribution is exactly $\delta_{\mathrm{new}}$.
If $\Prob_v(A)=0$, the $A$ contribution is zero and the claim follows. Otherwise, partition $A$ by $M$. On $A\cap\{M=0\}$, the defended-target event has probability at most $\ASR_{\mathrm{base}}\eta_{\mathrm{loc}}$. 
On $A\cap\{M=1\}$, the exact contribution is $\ASR_{\mathrm{base}}(1-\eta_{\mathrm{loc}})\varepsilon_{\mathrm{rep}}$, including the zero-mass convention of Equation~\ref{eq:app_epsilon_rep}.
Adding the disjoint contributions proves the bound.
\end{proof}

\begin{corollary}[Reference-aware residual-ASR bound]
\label{cor:reference_asr}
Assume $C_{\mathrm{pair}}$ and
Equation~\ref{eq:app_localizer_inclusion}.
Then
\begin{equation}
    \ASR_{\mathrm{def}}
    \leq
    \ASR_{\mathrm{base}}
    \left[
        \eta_{\mathrm{ref}}
        +(1-\eta_{\mathrm{ref}})\varepsilon_{\mathrm{rep}}
    \right]
    +\delta_{\mathrm{new}} \enspace.
    \label{eq:app_reference_asr}
\end{equation}
\end{corollary}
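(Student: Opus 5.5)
The plan is to combine the residual-ASR bound of Proposition~\ref{prop:residual_asr} with the localization bound of Proposition~\ref{prop:reference_to_localization}. The only thing to check is that the bracketed factor is monotone in the miss rate. Both propositions are available under the stated hypotheses: Proposition~\ref{prop:residual_asr} holds unconditionally under the conventions of Section~\ref{app:space}, and Proposition~\ref{prop:reference_to_localization} requires exactly $C_{\mathrm{pair}}$ and Equation~\ref{eq:app_localizer_inclusion}, which we assume.

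First, write the right-hand side of Equation~\ref{eq:app_residual_asr} as
\[
\ASR_{\mathrm{base}}\,\phi(\eta_{\mathrm{loc}})+\delta_{\mathrm{new}},
\qquad
\phi(\eta)=\eta+(1-\eta)\varepsilon_{\mathrm{rep}}=\varepsilon_{\mathrm{rep}}+\eta(1-\varepsilon_{\mathrm{rep}}).
\]
Since $\varepsilon_{\mathrm{rep}}\in[0,1]$ is a conditional probability, or zero by the convention of Equation~\ref{eq:app_epsilon_rep}, the slope $1-\varepsilon_{\mathrm{rep}}$ is nonnegative, so $\phi$ is nondecreasing. In addition, $\ASR_{\mathrm{base}}\ge0$.

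Second, Proposition~\ref{prop:reference_to_localization} gives $\eta_{\mathrm{loc}}\le\eta_{\mathrm{ref}}$. Monotonicity then gives $\phi(\eta_{\mathrm{loc}})\le\phi(\eta_{\mathrm{ref}})$. Multiplying by $\ASR_{\mathrm{base}}$ and adding $\delta_{\mathrm{new}}$ yields Equation~\ref{eq:app_reference_asr}.

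The main subtlety is that $\varepsilon_{\mathrm{rep}}$ in the statement is the same quantity as in Proposition~\ref{prop:residual_asr}. It is conditioned on $\{M=1\}\cap A$, not on $\{\rho_v=1\}\cap A$, so it is held fixed while we substitute the larger miss rate; no re-conditioning is needed. The zero-prevalence cases are handled automatically. If $\Prob_v(A)=0$, both sides reduce to at most $\delta_{\mathrm{new}}$ and the bound holds directly. If $\Prob_v(M=1,A)=0$, then $\varepsilon_{\mathrm{rep}}=0$ and $\phi(\eta)=\eta$ is still monotone.
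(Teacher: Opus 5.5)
Your proof is correct and follows the paper's own argument. You take $\eta_{\mathrm{loc}}\le\eta_{\mathrm{ref}}$ from Proposition~\ref{prop:reference_to_localization}, note that $\eta\mapsto\eta+(1-\eta)\varepsilon_{\mathrm{rep}}$ is nondecreasing because $\varepsilon_{\mathrm{rep}}\le 1$, and substitute into Proposition~\ref{prop:residual_asr}; your remarks on holding $\varepsilon_{\mathrm{rep}}$ fixed and on the zero-prevalence cases are accurate additions.
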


\begin{proof}
Proposition~\ref{prop:reference_to_localization} gives $\eta_{\mathrm{loc}}\leq\eta_{\mathrm{ref}}$. 
For fixed $\varepsilon_{\mathrm{rep}}\in[0,1]$, the function $f(\eta)=\eta+(1-\eta)\varepsilon_{\mathrm{rep}}$ is nondecreasing because $f'(\eta)=1-\varepsilon_{\mathrm{rep}}\geq0$. Substitute this inequality into Proposition~\ref{prop:residual_asr}.
\end{proof}

%%%
\subsubsection{Effect of the No New Target Veto}
\label{app:new_target}

Let $\mathcal S=\{x:M(x)=1\}$ denote the supported repair region.
The No New Target Veto is applied to the discovered target $\hat t$ and enforces
\begin{equation}
    x\notin\mathcal S \enspace,
    \quad
    \Yhead(x)\neq\hat t
    \quad\Longrightarrow\quad
    \Ydef(x)\neq\hat t \enspace.
    \label{eq:app_nnt_property}
\end{equation}

\begin{proposition}[New-target support restriction]
\label{prop:new_target_support}
Assume $C_{\mathrm{tgt}}$, i.e., an active state is released and the discovered target satisfies $\hat t=t$. 
Within $\Prob_v$,
\begin{equation}
    \delta_{\mathrm{new}}
    \leq
    \Prob_v(x\in\mathcal S,A^c) \enspace.
    \label{eq:app_new_target_bound}
\end{equation}
If $\mathcal S\subseteq A$, then $\delta_{\mathrm{new}}=0$.
\end{proposition}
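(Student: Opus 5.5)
The plan is a pointwise inclusion argument under $C_{\mathrm{tgt}}$, followed by monotonicity of $\Prob_v$. Since $C_{\mathrm{tgt}}$ gives $R=1$ and $\hat t=t$, the released state is active, so the No New Target Veto is actually applied. Its defining property \eqref{eq:app_nnt_property} therefore holds with $\hat t$ replaced by the true target $t$.

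The key step is to show the event inclusion
\[
\{\Ydef=t\}\cap A^c\subseteq\{x\in\mathcal S\}\cap A^c .
\]
Take a pixel with $\Ydef(x)=t$ and $\Yhead(x)\neq t$, i.e.\ $x\in A^c$. Suppose, for contradiction, that $x\notin\mathcal S$. Then both hypotheses of \eqref{eq:app_nnt_property} hold: $x\notin\mathcal S$ and $\Yhead(x)\neq\hat t=t$. The property then forces $\Ydef(x)\neq t$, which contradicts $\Ydef(x)=t$. Hence $x\in\mathcal S$, and the inclusion holds.

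Applying $\Prob_v$ to both sides and using the definition \eqref{eq:app_delta_new} of $\delta_{\mathrm{new}}$ gives
\[
\delta_{\mathrm{new}}=\Prob_v(\Ydef=t,A^c)\leq\Prob_v(x\in\mathcal S,A^c),
\]
which is \eqref{eq:app_new_target_bound}. For the second claim, if $\mathcal S\subseteq A$ then $\{x\in\mathcal S\}\cap A^c=\varnothing$. The right-hand side is therefore zero, and since $\delta_{\mathrm{new}}\geq0$ we get $\delta_{\mathrm{new}}=0$.

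There is no genuine analytic obstacle here. The only point needing care is the identification $\hat t=t$, which is what allows the veto, stated for the discovered target, to control the true-target event $\{\Ydef=t\}$. Without $C_{\mathrm{tgt}}$ the veto would constrain $\hat t$ rather than $t$, and the inclusion would fail. It is also worth remarking that the inclusion is deterministic and pointwise, so it does not depend on which measure is used; restricting to $\Prob_v$ is only needed to match the definition of $\delta_{\mathrm{new}}$.
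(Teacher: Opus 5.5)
Your proof is correct and follows the paper's argument essentially verbatim: under $C_{\mathrm{tgt}}$, the veto property with $\hat t=t$ gives the pointwise inclusion $\{\Ydef=t\}\cap A^c\subseteq\mathcal S\cap A^c$. Monotonicity of $\Prob_v$ then yields the bound, and $\mathcal S\subseteq A$ makes the right-hand event empty, so $\delta_{\mathrm{new}}=0$.
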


\begin{proof}
Under $C_{\mathrm{tgt}}$, $\hat t=t$, so Equation~\ref{eq:app_nnt_property} excludes $\{\Ydef=t,A^c\}$ outside $\mathcal S$.
Hence any event counted by $\delta_{\mathrm{new}}$ must lie in $\mathcal S\cap A^c$.
If $\mathcal S\subseteq A$, this set is empty.
\end{proof}

\begin{corollary}[No new true-target attacks under target-correct transformer localization]
\label{cor:app_sf_no_new_target}
Assume $C_{\mathrm{tgt}}$. Then the transformer runtime satisfies

\begin{equation*}
    M_{\mathrm{TF}} \subseteq \{\Yhead=\hat t\}= \{\Yhead=t\} = A \enspace.
\end{equation*}

Consequently,
\begin{equation}
\delta_{\mathrm{new}}=0 \enspace.
\end{equation}
\end{corollary}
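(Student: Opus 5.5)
The argument has two parts: a set-inclusion chain for the transformer mask, followed by an application of Proposition~\ref{prop:new_target_support}. For the inclusion, I use the construction of $M_{\mathrm{TF}}$ from the method section. Its five sequential, mutually exclusive stages are each restricted to baseline head-target pixels for the released target $\hat t$, so every stage mask is a subset of $\{\Yhead=\hat t\}$. A finite union of such subsets is again a subset, which gives $M_{\mathrm{TF}}\subseteq\{\Yhead=\hat t\}$. Under $C_{\mathrm{tgt}}$ we have $\hat t=t$, so $\{\Yhead=\hat t\}=\{\Yhead=t\}$, and this set is $A$ by the definition $A=\{\Yhead=t\}$ in Section~\ref{app:space}. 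Because $C_{\mathrm{tgt}}$ includes $R=1$, the state is active and the mask is the implemented one, not the fail-closed zero mask.

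For the conclusion, I identify the supported repair region $\mathcal S=\{x:M(x)=1\}$ with $M_{\mathrm{TF}}$ in the transformer runtime, so the chain gives $\mathcal S\subseteq A$. Proposition~\ref{prop:new_target_support}, whose hypothesis $C_{\mathrm{tgt}}$ is assumed, bounds $\delta_{\mathrm{new}}\leq\Prob_v(x\in\mathcal S,A^c)$. Since $\mathcal S\cap A^c=\varnothing$, this bound is zero. Together with $\delta_{\mathrm{new}}\geq 0$, this yields $\delta_{\mathrm{new}}=0$. This is exactly the last clause of Proposition~\ref{prop:new_target_support}, so that clause can be invoked directly.

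The main obstacle is justifying that the veto's region $\mathcal S$ really is $M_{\mathrm{TF}}$ and that no repair branch acts outside it. In particular, feature transport passes modified features through the rest of the network, which could change predictions at unmasked pixels. The veto property in Equation~\ref{eq:app_nnt_property} is stated for $x\notin\mathcal S$ regardless of how $\Yrep$ was produced, so such spillover can change a pixel outside $\mathcal S$ but cannot turn it into $\hat t$ when $\Yhead(x)\neq\hat t$. I would therefore make explicit that the veto is applied with $\mathcal S=M_{\mathrm{TF}}$ as the authorized mask. No other property of the repair is needed.
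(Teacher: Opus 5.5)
Your proposal is correct and follows the paper's proof. Both arguments restrict each transformer stage to $\{\Yhead=\hat t\}$ to get $M_{\mathrm{TF}}\subseteq\{\Yhead=\hat t\}$, use $C_{\mathrm{tgt}}$ to rewrite this as $M_{\mathrm{TF}}\subseteq A$, and then apply the last clause of Proposition~\ref{prop:new_target_support}. Your explicit identification of the veto's authorized region $\mathcal S$ with $M_{\mathrm{TF}}$ makes precise a step the paper leaves implicit.
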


\begin{proof}
Every transformer localization branch is restricted to baseline pixels predicted as the discovered target $\hat t$, and the final localization mask is their union. 
Hence
$M_{\mathrm{TF}}\subseteq\{\Yhead=\hat t\}$.
Under $C_{\mathrm{tgt}}$ we have $\hat t=t$, so $M_{\mathrm{TF}}\subseteq A$.
Proposition~\ref{prop:new_target_support} therefore gives $\delta_{\mathrm{new}}=0$.
\end{proof}

%%%
\subsubsection{Clean-side intervention}
\label{app:clean}

Define
\begin{equation}
    \kappa_{\mathrm{cl}}=\Prob_{\mathrm{cl}}(M=1)
    \label{eq:app_clean_activation}
\end{equation}
and
\begin{equation}
    \delta_{\mathrm{spill}}^{\mathrm{cl}}
    =
    \Prob_{\mathrm{cl}}(\Ydef\neq\Yhead,M=0) \enspace.
    \label{eq:app_clean_spill}
\end{equation}

\begin{proposition}[Clean prediction-change bound]
\label{prop:clean_change}
\begin{equation}
    \Prob_{\mathrm{cl}}(\Ydef\neq\Yhead)
    \leq
    \kappa_{\mathrm{cl}}
    +\delta_{\mathrm{spill}}^{\mathrm{cl}} \enspace.
    \label{eq:app_clean_change}
\end{equation}
\end{proposition}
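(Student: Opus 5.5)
The bound follows from splitting the clean-side change event according to the localization indicator $M$, in the same way the residual-ASR bound in Proposition~\ref{prop:residual_asr} splits $\{\Ydef=t\}$. Since $M:\Omega\rightarrow\{0,1\}$, the events $\{M=1\}$ and $\{M=0\}$ partition $\Omega$. Hence
\[
\{\Ydef\neq\Yhead\}
=
\bigl(\{\Ydef\neq\Yhead\}\cap\{M=1\}\bigr)
\,\cup\,
\bigl(\{\Ydef\neq\Yhead\}\cap\{M=0\}\bigr),
\]
and the union is disjoint. Applying finite additivity of $\Prob_{\mathrm{cl}}$, which is a genuine probability measure because $\Prob(T^c)>0$ by the standing conventions of Section~\ref{app:space}, gives an exact decomposition of $\Prob_{\mathrm{cl}}(\Ydef\neq\Yhead)$ into two terms.

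The second term, $\Prob_{\mathrm{cl}}(\Ydef\neq\Yhead,M=0)$, is by definition exactly $\delta_{\mathrm{spill}}^{\mathrm{cl}}$ from Equation~\ref{eq:app_clean_spill}. The first term is bounded by monotonicity of measure: $\{\Ydef\neq\Yhead,M=1\}\subseteq\{M=1\}$, so
\[
\Prob_{\mathrm{cl}}(\Ydef\neq\Yhead,M=1)\leq\Prob_{\mathrm{cl}}(M=1)=\kappa_{\mathrm{cl}}
\]
by Equation~\ref{eq:app_clean_activation}. Adding the two contributions yields Equation~\ref{eq:app_clean_change}.

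There is essentially no obstacle. The only point to check is the fail-closed case $R=0$. There $M=0$ by convention and the undefended prediction is returned, so $\kappa_{\mathrm{cl}}=\delta_{\mathrm{spill}}^{\mathrm{cl}}=0$ and the bound holds trivially with equality. It may also be worth remarking that the inequality is tight exactly when every selected clean pixel actually changes, and that no assumption such as $C_{\mathrm{pair}}$ or $C_{\mathrm{tgt}}$ is needed, since the argument uses only the partition by $M$.
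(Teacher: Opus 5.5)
Your proof is correct and follows the paper's argument: you partition the change event by $M$, bound the localized part by $\Prob_{\mathrm{cl}}(M=1)=\kappa_{\mathrm{cl}}$, and identify the non-localized part exactly with $\delta_{\mathrm{spill}}^{\mathrm{cl}}$. Your remarks on the fail-closed case $R=0$ and on the absence of any $C_{\mathrm{pair}}$ or $C_{\mathrm{tgt}}$ assumption are accurate additions.
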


\begin{proof}
Partition the prediction-change event by $M$. The localized contribution is at most $\Prob_{\mathrm{cl}}(M=1)=\kappa_{\mathrm{cl}}$ and the non-localized contribution is exactly $\delta_{\mathrm{spill}}^{\mathrm{cl}}$.
\end{proof}

Exact preservation follows when both terms are zero, but this is not assumed.
The probability-residual branch is calibrated under a finite empirical activation budget on the safety-approved reference population. This budget
does not constitute a population-level guarantee on future clean data.

%%%
\subsubsection{Dependency summary}
\label{app:dependencies}

\begin{table}[h]
\centering
\small
\caption{Role and scope of the theoretical results.}
\label{tab:app_dependencies}
\begin{tabular}{lll}
\toprule
Result & Main condition & Consequence \\
\midrule
Lemma~\ref{lem:app_simplex} & local directed transport & pair margin decreases \\
Cor.~\ref{cor:app_pair_flip} & positive initial pair margin & pairwise preference flips \\
Lemma~\ref{lem:app_inverse_operator} & fixed feasible transport amount & local inverse exists \\
Lemma~\ref{lem:transport} & feasible one-edge shift & coupling representation \\
Lemma~\ref{lem:app_reference_mass} & evidence definition & surviving victim evidence \\
Prop.~\ref{prop:reference_to_localization} & pair-correct state and additive semantic arm & $\eta_{\mathrm{loc}}\leq\eta_{\mathrm{ref}}$ \\
Prop.~\ref{prop:app_pair_separation} & scalar evidence gap & sufficient scalar separation \\
Prop.~\ref{prop:app_sf_mapping_contract} & family viability and direction agreement & fail-closed mapping release \\
Prop.~\ref{prop:app_one_way} & monotone filtering & vetoed references cannot return \\
Lemma~\ref{lem:app_orthogonal} & nondegenerate pair axis & axis-invariant matching \\
Prop.~\ref{prop:minimal_pair_projection} & full pair-only correction & minimum-$\ell_2$ substep \\
Prop.~\ref{prop:residual_asr} & localization/repair terms & residual-ASR bound \\
Cor.~\ref{cor:app_sf_no_new_target} & target-correct state and transformer support $\subseteq A$ & $\delta_{\mathrm{new}}=0$ \\
Prop.~\ref{prop:clean_change} & selective intervention & clean-change bound \\
\bottomrule
\end{tabular}
\end{table}

\begin{remark}[Scope of the theory]
The analysis does not claim that a real backdoor changes only two class probabilities, that its internal effect is one-dimensional, or that the unknown attack operator is observed exactly.
The inverse operator and the transposed coupling are distinct mathematical objects, as stated in the
main text and formalized above. 
It also does not claim that the safety-approved reference population is genuinely clean, or that a scalar
population pair score reproduces the complete transformers mapping selector.
The transport model is a local abstraction and the coupling is feasible rather than optimal. 
Pair-separation results are sufficient, not necessary. 
Proposition~\ref{prop:app_sf_mapping_contract} states a deterministic release contract conditional on observable builder evidence and is not a distribution-free theorem that the released pair equals the true attacked pair.
Calibration budgets are empirical reference-population constraints.
Clean preservation depends on localization and spillover rather than on an oracle clean-input gate.
\end{remark}

\subsection{Additional Method Details}
\label{app:method_details}

This subsection specifies the implementation details required to reproduce the operational pipeline in Section~\ref{sec:method}. 
Throughout this section, $(v,t)$ denotes the operational victim-target pair released by the attack-unaware builder. 
In the preceding theoretical analysis, $(v,t)$ instead denotes the true attacked pair, while $(\hat v,\hat t)$ denotes the released pair.

We use \emph{safety-approved} or \emph{clean-like} reference rather than \emph{clean reference}, since the defense has no access to true clean/poison provenance. 

%%%
\subsubsection{Information boundary and state construction}
\label{app:information_boundary}

RePAIR receives a frozen segmentation model and an unlabeled image pool that
may contain triggered samples.
Semantic ground truth, clean/poison provenance, attack masks, clean twins, attack identity, poisoning rate, and the true victim--target pair are excluded from all defense decisions.
Annotation fields are removed before every forward pass contributing to pair discovery, reference construction, calibration, localization, or repair.
Oracle information is accessed only after the official state has been frozen and serialized and is used exclusively for evaluation and diagnostics.

State construction proceeds through three fail-closed stages: attack-unaware mapping inference, construction of an image-disjoint safety-approved reference population, and reconstruction of the frozen runtime state from that population.
Failure to establish sufficient mapping confidence, reference capacity, or semantic coverage leaves the defense inactive.

%%%
\subsubsection{Architecture-specific semantic evidence}
\label{app:semantic_evidence}

For CNNs, normalized features from the final backbone representation define the CNN semantic reference.
For head-pseudo-labeled reference pixels $\mathcal P_k$, class centroids are
\begin{equation}
    \mu_k
    =
    \frac{1}{|\mathcal P_k|}
    \sum_{x\in\mathcal P_k}z(x) \enspace,
    \qquad
    \Yproto(x)
    =
    \arg\max_k
    \operatorname{sim}(z(x),\mu_k) \enspace.
    \label{eq:app_cnn_reference}
\end{equation}
Directed internal-to-head disagreement for candidate pair $(v,t)$ is measured by
\begin{equation}
    G_i^{\mathrm{CNN}}(v,t)
    =
    \frac{1}{|\Omega_i|}
    \sum_{x\in\Omega_i}
    \indic{
        \Yproto(x)=v
        \wedge
        \Yhead(x)=t
    } \enspace.
    \label{eq:app_cnn_pair_flow}
\end{equation}

The final builder resolves the target before selecting the victim. Once the target is confidence-resolved, victim proposals are restricted to that target and combine the established pair-evidence score with complementary victim-retention and victim-interior cues. 
A proposal is accepted only when it is sufficiently separated from competing candidates, otherwise mapping fails closed. 
No class identifier or attack-specific pair is hard-coded. 
Every selected pair must subsequently pass the unchanged reference-safety pipeline.

Transformers combine semantic evidence from $L$ selected internal representations. 
For representation $\ell$, victim and target centroid similarities are
\begin{equation}
    c_{\ell,v}(x)
    =
    \operatorname{sim}(z_\ell(x),\mu_{\ell,v}) \enspace,
    \qquad
    c_{\ell,t}(x)
    =
    \operatorname{sim}(z_\ell(x),\mu_{\ell,t}) \enspace.
    \label{eq:app_sf_layer_similarity}
\end{equation}
The selected layers are concatenated into
\begin{equation}
    \phi_{v,t}(x)
    =
    \big[
        c_{1,v}(x),\ldots,c_{L,v}(x),
        c_{1,t}(x),\ldots,c_{L,t}(x)
    \big]^\top \enspace,
    \label{eq:app_sf_joint_features}
\end{equation}
and a regularized logistic readout
\begin{equation}
    r_{v,t}(x)
    =
    \sigma
    \left(
        w^\top\widetilde{\phi}_{v,t}(x)+b
    \right)
    \label{eq:app_sf_logreg}
\end{equation}
provides the internal target compatibility. 
where $w$ and $b$ denote the learned weight vector and bias of the regularized logistic readout, respectively, and $\widetilde{\phi}_{v,t}(x)$ is the standardized joint multi-layer feature representation for pair $(v,t)$.
The corresponding victim support is
\begin{equation}
    p_v^{\mathrm{int}}(x)=1-r_{v,t}(x) \enspace.
    \label{eq:app_sf_victim_probability}
\end{equation}
Training labels for this readout are generated exclusively from frozen head pseudo-labels.

Transformers mapping first forms stable undirected family proposals from structural, mechanism, and family-verification evidence. 
A frozen rank prior is retained whenever its candidate passes the base viability test. 
If it is not viable, replacement is permitted only through the rescue path, which requires
bilateral leave-one-out support together with bilateral structural consistency.
Direction is then authorized within the frozen family using FullEdge, FamilyVerify, and an eligible Mechanism vote. 
Structural evidence does not act as a regular direction vote and is used only to corroborate the rescue rule.
Insufficient family viability or direction evidence leaves the mapping inactive. 
The formal release contract is given in Proposition~\ref{prop:app_sf_mapping_contract}.

%%%
\subsubsection{Reference-state safety}
\label{app:reference_state_safety}

Let $\mathcal C_0$ denote the provisional reference population and $V_i\in\{0,1\}$ the victim-safety veto. The safety-approved population is
\begin{equation}
    \mathcal C_{\mathrm{safe}}
    =
    \{i\in\mathcal C_0:V_i=0\} \enspace.
    \label{eq:app_reference_safe_set}
\end{equation}
All later repair banks satisfy
\begin{equation}
    \mathcal C_{\mathrm{bank}}
    \subseteq
    \mathcal C_{\mathrm{safe}}
    \subseteq
    \mathcal C_0 \enspace.
    \label{eq:app_reference_one_way}
\end{equation}
Thus, a vetoed reference cannot be reintroduced by refinement, capacity restriction, fallback, or backfilling.

Safety statistics are estimated image-disjointly. 
References with severe victim-evidence collapse, or moderate collapse accompanied by unusually
strong victim-to-target rupture, are rejected. 
State construction requires at least $24$ safety-approved images and at least $75\%$ semantic-class
coverage.

Capacity restriction is also fail-closed.
If a compressed reference core loses sufficient victim or target prototype support, the implementation may
retry the same capacity check on the larger already safety-approved
reservoir. 
This fallback does not relax thresholds, re-admit vetoed samples, or introduce backfill.

%%%
\subsubsection{Localization and repair}
\label{app:localization_repair}

For CNNs, runtime localization combines direct semantic contradiction, reference-calibrated geometric residuals, and a semantic probability residual:
\begin{equation}
    M_{\mathrm{CNN}}
    =
    M_{\mathrm{semantic}}
    \vee
    M_{\mathrm{geom}}
    \vee
    M_{\mathrm{prob}} \enspace.
    \label{eq:app_final_cnn_mask}
\end{equation}

For transformers, the final support is the sequential union
\begin{equation}
    M_{\mathrm{TF}}
    =
    M_{\mathrm{semantic}}
    \vee
    M_{\mathrm{src}}
    \vee
    M_{\mathrm{tgt}}
    \vee
    M_{\mathrm{traj}}
    \vee
    M_{\mathrm{prob}} \enspace,
    \label{eq:app_final_sf_mask}
\end{equation}
with
\begin{equation}
    M_{\mathrm{TF}}
    \subseteq
    \{\Yhead=t\} \enspace.
    \label{eq:app_sf_head_target_support}
\end{equation}

Both architectures use the same final probability-residual principle.
For discovered pair $(v,t)$, define the target-over-victim probability excess
\begin{equation}
    d_p(x)=q_t(x)-q_v(x) \enspace.
\end{equation}
Relative to the expected excess $\widehat d_{\mathrm{ref}}(x)$ obtained from the safety-approved semantic reference,
\begin{equation}
    D_{\mathrm{prob}}(x)
    =
    \left[
        d_p(x)-\widehat d_{\mathrm{ref}}(x)
    \right]_+ \enspace.
    \label{eq:app_probability_residual}
\end{equation}
The residual, $\delta_{\mathrm{prob}}$, is calibrated in four image-disjoint folds under the fixed empirical reference-pixel budget.

CNNs repair localized features from the final backbone layer toward matched safety-approved victim references and rerun the unchanged remainder of the network.
Transformers transports localized pixels toward its frozen multi-layer victim manifold.
Both architectures may subsequently apply a conservative pair-specific output correction. 
No network parameter is retrained or permanently modified.

Finally, feature repair may influence predictions outside directly localized cells.
Let $\Yrep$ denote the prediction immediately before the output veto and $\mathcal S=\{x:M(x)=1\}$.
RePAIR reverts to the undefended prediction whenever
\begin{equation}
    x\notin\mathcal S \enspace,
    \qquad
    \Yhead(x)\neq t \enspace,
    \qquad
    \Yrep(x)=t \enspace.
    \label{eq:app_veto}
\end{equation}
Thus, the repair cannot introduce a new unsupported target prediction outside the localized support.

%%%%%%%%%%%%%%%%%%%%%%%%%%%%%%%%%%%%%%%%%%%%%%%%%

\subsection{Details on experimental setting}
\label{app:exp_setting}

%%%
\paragraph{Datasets.}
We consider three datasets for evaluation, namely Cityscapes \cite{cordts2016Cityscapes}, ADE20K \citep{zhou2017scene}, and PASCAL VOC 2012 \citep{everingham2015pascal}. 
Cityscapes consists of $2,\!975$ training and $500$ validation images with a fixed resolution of $2048\times1024$ pixels and defines $19$ semantic classes for evaluation. 
The dataset focuses on densely annotated urban street scenes and contains a diverse set of simultaneously occurring object and stuff classes, such as roads, buildings, vegetation, pedestrians, and vehicles.

ADE20K is a substantially larger and more scene-centric benchmark, comprising $20,\!210$ training and $2,\!000$ validation images with variable native resolutions and dense pixel-level annotations across $150$ semantic classes, including both object and stuff categories.

We further evaluate RePAIR on PASCAL VOC 2012, whose semantic segmentation benchmark comprises $1,\!464$ training and $1,\!449$ validation images with variable image resolutions, typically around $500\times375$ pixels, and defines $20$ foreground classes plus a background class. 
In contrast to the more scene-centric Cityscapes and ADE20K datasets, PASCAL VOC is predominantly object-centric: images generally focus on one or more salient foreground objects belonging to the predefined object classes, while pixels not assigned to these classes are represented by a single background label.

Since both, PASCAL VOC and ADE20K, contain images with varying native resolutions, we resize all images to $512\times512$ pixels. 
We use ADE20K as an attack-effectiveness negative control. since the evaluated attacks do not produce a sufficiently strong backdoor effect on this dataset, RePAIR is not applied.

%%%
\paragraph{Segmentation models.} 
We evaluate two representative segmentation architectures: DeepLabV3+ with a ResNet-50 backbone \citep{Chen2018_deeplab} and SegFormer with a MiT-B5 backbone \citep{xie2021segformersimpleefficientdesign}. 
DeepLabV3+ is trained for 80,000 iterations on Cityscapes and ADE20K and for 40,000 iterations on PASCAL VOC 2012, while SegFormer is trained for 160,000 iterations on all datasets.
The other training parameters are as specified in the repository\footnote{\url{https://github.com/open-mmlab/mmsegmentation}}.

%%%
\paragraph{Backdoor attacks.}
We consider IBA with NNI and PRL \citep{lan2024influencer}, ConSeg \citep{abbasi2025consegcontextualbackdoorattack}, OFBA \citep{mao2023object}, and BadSeg \citep{zhang2026poisoningpixelsrevisitingbackdoor}. 
All attacks are trained with a $10\%$ poisoning rate, and trigger configurations follow the corresponding attack papers. 

For IBA, we use the Hello Kitty trigger and the victim-to-target relabeling described in the original method. 
IBA-NNI places the trigger on non-victim pixels close to the victim region, using a $15\times15$ trigger on PASCAL VOC and a $55\times55$ trigger on Cityscapes. 
IBA-PRL additionally applies Pixel Random Labeling (PRL), where $50,\!000$ additional pixels are assigned randomly sampled labels from the same segmentation mask. 

OFBA uses a $29\times29$ black-and-white checkerboard trigger with visibility coefficient $\lambda=0.4$, placed entirely within an object region.
The corresponding object is relabeled to the target class, while the attacked object and trigger position are varied during training. 

ConSeg uses a naturally occurring object as a semantic trigger and relabels the victim class as the target class. 
It additionally modifies victim-region labels using target-related co-occurring classes, with five such classes and four replaced pixels per class in the default configuration.

BadSeg jointly optimizes the trigger and victim-target pair using a surrogate segmentation model. We evaluate its object-to-object (O20), object-to-background (O2B), background-to-object (B2O), and background-to-background (B2B) configurations.

%%%
\paragraph{Baseline defenses.}
For the defense benchmarks, we use the method-specific parameters listed in Table~\ref{tab:defense_hyperparameters}, following their original configurations.

\begin{table}[t]
\centering
\caption{Hyperparameters of the benchmark defenses.}
\label{tab:defense_hyperparameters}
\small
\begin{tabular}{lll}
\toprule
\textbf{Defense} & \textbf{Parameter} & \textbf{Value} \\
\midrule

Fine-tuning
& Clean data fraction & $10\%$ \\
& Epochs & 10 \\
\midrule

Pruning
& Pruning ratio & $10\%$ \\
& Criterion & Least-active final-layer channels \\
\midrule

ABL
& Isolation epochs & 20 \\
& Isolation ratio $p$ & 0.01 \\
& $\gamma$ & 0.5 \\
& Batch / scoring batch & $8/8$ \\
& Unlearning epochs & 5 \\
& Unlearning optimizer & SGD \\
& Unlearning learning rate & $5\times10^{-4}$ \\
& Recovery epochs & 0 \\
& Clean-loss guard & 32 samples, threshold $3\times$ \\
\midrule

Segment Cleanse
& Clean images & 1000 \\
& Working resolution & $512\times1024$ \\
& Target knowledge & Unknown; all alternatives scanned \\
& Reverse optimizer & Adam \\
& Reverse learning rate & $5\times10^{-3}$ \\
& Regularization $\alpha$ & $10^{-4}$ \\
& Maximum reverse epochs & 100 \\
& Early-stop warm-up / patience & $10/5$ \\
& Relative threshold & $10^{-4}$ \\
& Parallel target candidates & 2 \\
& Mitigation & Decoder-only \\
& Mitigation optimizer & SGD \\
& Mitigation learning rate & $10^{-4}$ \\
& Mitigation epochs & 20 \\
\bottomrule
\end{tabular}
\end{table}

%%%
\paragraph{Evaluation metrics.}

We evaluate attack effectiveness and segmentation utility using Attack Success Rate (ASR), benign mean intersection-over-union (mIoU-B), and attacked mIoU (mIoU-A). 
ASR measures the fraction of ground-truth victim-class pixels on triggered inputs that are predicted as the attacker-selected target class. For BadSeg-B2O, where the attack is restricted to a designated spatial region, ASR is computed only within the corresponding evaluation region.

mIoU-A measures the mean IoU on triggered inputs after excluding the pixels used to evaluate the attack. For class-wide attacks, all ground-truth victim-class pixels are excluded, whereas for BadSeg-B2O only the designated attack region is excluded. mIoU-B measures the mean IoU on clean inputs. 
In the literature, mIoU-A and mIoU-B are also referred to as Poisoned Benign Accuracy (PBA) and Clean Benign Accuracy (CBA), respectively \citep{lan2024influencer,abbasi2025consegcontextualbackdoorattack}.

Lower ASR indicates stronger attack mitigation, while higher mIoU-A and mIoU-B indicate better segmentation utility.

%%%
\paragraph{Hyperparameter selection. }
\label{app:hyperparameters}

All RePAIR hyperparameters are treated as global method settings and fixed independently of the evaluated attack instance. We do not select parameters based on attack identity, the oracle victim-target pair, clean/poison provenance, attack masks, ground-truth segmentation labels, or evaluation ASR. 
Structural parameters, such as minimum reference capacity, semantic coverage requirements, retrieval sizes, and calibration budgets, are fixed across attacks. 
Data-dependent thresholds are reconstructed only from the frozen safety-approved reference population using image-disjoint calibration and remain fixed during evaluation.
Thus, attack-specific test performance is never used to tune the repair state.

Run-specific preprocessing, checkpoints, random seeds, and execution parameters are provided in the experiment configurations summarized in Table~\ref{tab:app_hyperparameters}.

\begin{table}[htbp]
\centering
\small
\caption{Principal global state-construction and inference parameters for Cityscapes.
}
\label{tab:app_hyperparameters}
\begin{tabular}{@{}llr@{}}
\toprule
Parameter & Variable & Value \\
\midrule
\multicolumn{3}{@{}l}{\textit{Shared}} \\
\quad Image-disjoint safety/calibration folds & $K_{\mathrm{fold}}$            & $4$ \\
\quad Minimum safety-approved images          & $N_{\mathrm{safe}}^{\min}$     & $24$ \\
\quad Minimum semantic class coverage         & $\gamma_{\mathrm{cov}}$        & $75\%$ \\
\quad Semantic probability budget             & $\delta_{\mathrm{prob}}$       & $0.05$ \\
\midrule
\multicolumn{3}{@{}l}{\textit{DeeplabV3+}} \\
\quad Target-authority winner ratio           & $\tau_{\mathrm{tgt}}$          & $4.00$ \\
\quad Legacy victim winner ratio              & $\tau_{\mathrm{vic}}$          & $1.20$ \\
\quad Legacy victim robust MAD-$z$            & $z_{\mathrm{vic}}^{\min}$      & $1.00$ \\
\quad Geometry-fallback robust MAD-$z$        & $z_{\mathrm{geo}}^{\min}$      & $2.00$ \\
\quad Reference-bank cap (per relevant class) & $N_{\mathrm{ref}}^{\max}$      & $4096$ \\
\quad Self-counterfactual strength            & $\lambda_{\mathrm{cf}}$        & $0.25$ \\
\quad Base residual allowance                 & $\delta_{\mathrm{base}}$       & $0.02$ \\
\quad Auxiliary residual allowance            & $\delta_{\mathrm{aux}}$        & $0.005$ \\
\midrule
\multicolumn{3}{@{}l}{\textit{SegFormer}} \\
\quad Semantic victim-support quantile        & $q_{\mathrm{sem}}$             & $0.995$ \\
\quad Repair-margin quantile
& $q_{\mathrm{rep}}$ & $0.10$ \\
\quad Repair-margin quantile                  & $q_{\mathrm{rep}}$             & $0.10$ \\
\quad Minimum suspicious OOF group (images)   & $N_{\mathrm{OOF}}^{\min}$      & $12$ \\
\quad Manifold retrieval neighbors            & $k_{\mathrm{man}}$             & $8$ \\
\quad Trajectory residual budget              & $\delta_{\mathrm{traj}}$       & $0.005$ \\
\bottomrule
\end{tabular}
\end{table}

%%%%%%%%%%%%%%%%%%%%%%%%%%%%%%%%%%%%%%%%%%%%%%%%%% 
\subsection{Additional experimental results}
\label{app:datasets}

%%%
\paragraph{ADE20K. }
\label{app:ade20k}

ADE20K contains 150 semantic classes and constitutes a substantially
different semantic setting from Cityscapes and PASCAL VOC. In our
experiments, the evaluated attack configurations do not establish a
meaningful backdoor effect in the undefended ADE20K models. The victim-target pair bein used on ADE20K is car-road.

We therefore do not treat ADE20K as an additional defense benchmark.
A defense result is only informative when the underlying attack first achieves a substantive victim-to-target effect, reducing an already negligible ASR would not demonstrate successful backdoor removal.

Accordingly, the official ADE20K evaluation is baseline-only. 
RePAIR state construction and repair are not applied. We report the undefended ASR to document the attack effectiveness of the evaluated configurations.

\begin{table}[t]
\caption{\textbf{ADE20K Attack Results} Undefended attack success rate (ASR, \%). RePAIR is not applied because the evaluated attacks do not establish a meaningful backdoor effect.}
\label{tab:ade20k}
\centering
\small
\renewcommand{\arraystretch}{1.0} 
\begin{tabular}{llccccc}
\toprule
Model & Method & IBA (NNI) & IBA (PRL) & ConSeg & OFBA & BadSeg-O2B \\
\midrule
DeepLabV3+ & no defense
& $3.02$
& $3.22$
& $2.88$
& $2.40$
& $2.58$ \\
\midrule
SegFormer & no defense
& $2.51$
& $2.38$
& $4.93$
& $2.48$
& $2.43$ \\
\bottomrule
\end{tabular}
%}
\end{table}

The ADE20K results should therefore be interpreted as an attack-effectiveness control rather than as evidence of defense performance.
They show that, under the evaluated configurations, the prerequisite backdoor behavior does not
emerge strongly enough to support a meaningful mitigation experiment.

%%%
\paragraph{PASCAL VOC 2012.}
PASCAL VOC provides a challenging transfer setting because the background class occupies a large fraction of many images and represents a heterogeneous collection of visual content rather than a compact semantic object class.
Consequently, natural victim-to-background relations can compete with the directed evidence produced by an O2B backdoor.

Unlike the Cityscapes experiments, all attacks considered on VOC use the same O2B-style semantic relation.
The victim--target pair is used only to compute the reported ASR and is never supplied to RePAIR. The chosen victim-target pair for VOC is person-background.

\begin{table}[t]
\caption{\textbf{PASCAL VOC 2012.} Undefended and defended attack success
rate (ASR, \%).}
\label{tab:voc_asr}
\centering
\small
\setlength{\tabcolsep}{5pt}
\renewcommand{\arraystretch}{1.2}
\begin{tabular}{llrr}
\toprule
Model & Attack & No defense & RePAIR \\
\midrule
\multirow{5}{*}{\shortstack[l]{DeeplabV3+}}
 & IBA (NNI)   & $12.21$ & $5.41$ \\
 & IBA (PRL)   & $15.03$ & $5.45$ \\
 & ConSeg      & $16.12$ & $7.05$ \\
 & OFBA        & $99.71$ & $8.51$ \\
 & BADSEG-O2B  & $92.14$ & $19.67$ \\
\midrule
\multirow{5}{*}{\shortstack[l]{SegFormer}}
 & IBA (NNI)   & $4.83$ & $0.12$ \\
 & IBA (PRL)   & $7.26$ & $0.19$ \\
 & ConSeg      & $6.67$ & $0.15$ \\
 & OFBA        & $98.42$ & $0.50$ \\
 & BADSEG-O2B  & $95.78$ & $0.34$ \\
\bottomrule
\end{tabular}
\end{table}

Table~\ref{tab:voc_asr} evaluates whether the attack-unaware state builder can recover a usable repair state despite the strongly background-dominated semantic distribution of VOC. 
The reported defended values correspond only to states released by the fail-closed construction procedure. No oracle victim or target information is used during state construction.

To quantify the utility retained after repair, we additionally report clean and attacked segmentation performance in Table~\ref{tab:voc_miou}.

\begin{table}[t]
\caption{\textbf{PASCAL VOC 2012 segmentation utility.} Mean IoU on clean
inputs (mIoU-B) and triggered inputs (mIoU-A).}
\label{tab:voc_miou}
\centering
\small
\begin{tabular}{lrr}
\toprule
Model & mIoU-A & mIoU-B \\
\midrule
DeeplabV3+ / ResNet-50
& $66.24_{-4.99}^{+2.71}$
& $58.21_{-2.05}^{+1.48}$ \\
SegFormer / MiT-B5
& $84.45_{-3.34}^{+1.28}$
& $76.17_{-0.62}^{+0.22}$ \\
\bottomrule
\end{tabular}
\end{table}

\end{document}